\documentclass{article}


\usepackage[utf8]{inputenc} 
\usepackage[T1]{fontenc}    
\usepackage{hyperref}       
\usepackage{url}            
\usepackage{booktabs}       
\usepackage{amsfonts}       
\usepackage{nicefrac}       
\usepackage{microtype}      
\usepackage{xcolor}         

\usepackage{cite}

\usepackage{amsmath,amssymb}
\usepackage{algorithmic}
\usepackage{algorithm}
\usepackage{graphicx}
\usepackage{textcomp}

\usepackage{wrapfig,lipsum}

\usepackage{dsfont}

\usepackage{algorithm}

\usepackage{caption}
\usepackage{subcaption}
\usepackage{amsthm}
\usepackage{subfloat}

\newcommand{\gp}{g_\pi}    
\newcommand{\gr}{g_r}

\newtheorem{theorem}{Proposition}
\newtheorem*{cor}{Corollary}
\graphicspath{ {./graphics/} }
\usepackage[preprint]{corl_2022} 

\title{USHER: Unbiased Sampling for\\ Hindsight Experience Replay}

%


\author{%
  Liam Schramm \\
  Department of Computer Science\\
  Rutgers University\\
  New Brunswick, NJ 08904 \\
  \texttt{lbs105@rutgers.edu} \\
   \And
  Yunfu Deng \\
  Department of Computer Science\\
  Rutgers University\\
  New Brunswick, NJ 08904 \\
  \texttt{yd275@scarletmail.rutgers.edu} \\
   \And
  Edgar Granados \\
  Department of Computer Science\\
  Rutgers University\\
  New Brunswick, NJ 08904 \\
  \texttt{gary.granados@gmail.com} \\
  \And
  Abdeslam Boularias \\
  Department of Computer Science\\
  Rutgers University\\
  New Brunswick, NJ 08904 \\
  \texttt{ab1544@rutgers.edu} \\
}

\begin{document}
\maketitle


\begin{abstract}
  Dealing with sparse rewards is a long-standing challenge in reinforcement learning (RL). Hindsight Experience Replay (HER) addresses this problem by reusing failed trajectories for one goal as successful trajectories for another. This allows for both a minimum density of reward and for generalization across multiple goals. However, this strategy is known to result in a biased value function, as the update rule underestimates the likelihood of bad outcomes in a stochastic environment. We propose an asymptotically unbiased importance-sampling-based algorithm to address this problem without sacrificing performance on deterministic environments. We show its effectiveness on a range of robotic systems, including challenging high dimensional stochastic environments.
\end{abstract}

\keywords{Reinforcement Learning, Multi-goal reinforcement learning} 
\vspace{-0.2cm}
\section{Introduction}
\vspace{-0.1cm}
In recent years, model-free reinforcement learning (RL) has become a popular approach in robotics. In particular, these methods stand out in their ability to learn near-optimal policies in high-dimensional spaces~\citep{DQN,AlphaGoZero,CURL}. 
One popular extension of RL, {\it multi-goal RL}, allows trained robots to generalize to new tasks by conditioning on a goal parameter that determines the reward function. 
However, RL algorithms often struggle with tasks that involve sparse rewards, as these environments can require a very large amount of exploration to discover good solutions. 
Hindsight Experience Replay (HER) offers a solution to the sparse reward problem for multi-goal reinforcement learning~\citep{HER}. 
\begin{wrapfigure}[14]{r}{.35\linewidth}
\vspace{-0.4cm}
		\centering
\begin{minipage}{.5\linewidth}
  \centering
  \includegraphics[width=.9\linewidth]{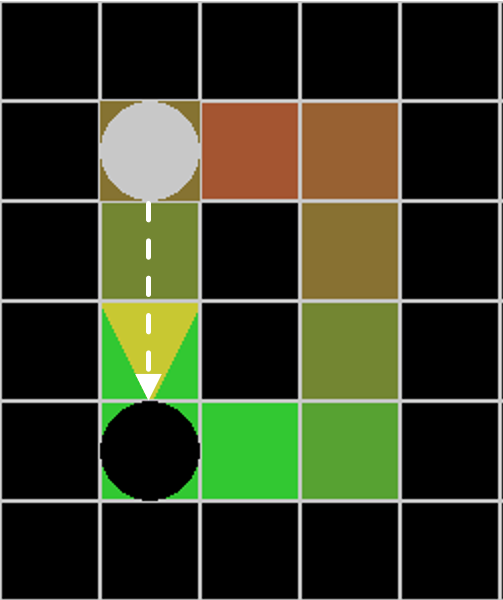}
\end{minipage}%
\begin{minipage}{.5\linewidth}
  \centering
  \includegraphics[width=.9\linewidth]{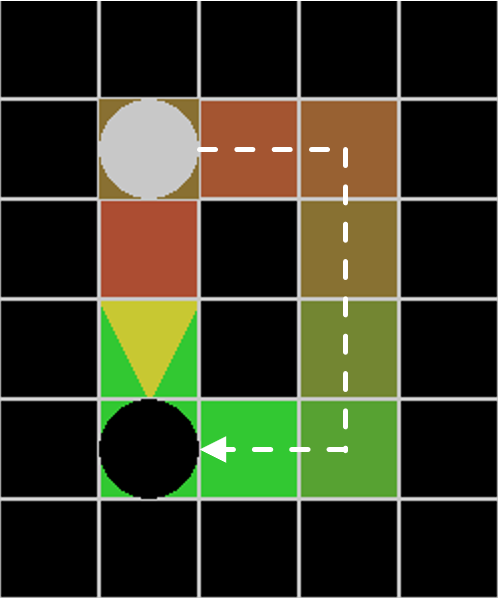}
\end{minipage}
\vspace{-0.2cm}
\caption{\small Q-values learned with HER (left), and Q-learning (right). A robot must navigate from the white circle to the black circle while avoiding obstacles (black squares) and risky areas (yellow triangle, $75\%$ chance of stopping the robot). The value function ranges from 1 (bright green) to 0 (bright red).}
\label{fig:test}
\end{wrapfigure} 
HER treats failed attempts to reach one goal as successful attempts to reach another goal. This significantly reduces the difficulty of the exploration problem, because it guarantees a minimum density of reward and ensures that every trajectory receives useful feedback on how to reach some goal, even when the reward signal is sparse. However, these benefits come with a trade-off. While HER is unbiased in deterministic environments, it is known to be asymptotically biased in stochastic environments~\citep{UnbiasedMethods,CallForResearch}. This is because HER suffers from a {\it survivorship bias}. Since failed trajectories to one goal are treated as successful trajectories to another, it follows that HER only ever sees successful trajectories. If a random event can prevent the robot from reaching a desired goal $g$, then HER will only sample $g$ as a goal when the event did not occur, leading it to significantly overestimate the likelihood of success and underestimate the likelihood of dangerous events.  Practically, this manifests as a tendency for HER to want to ``run red lights'' and take risks. 

We present a concrete toy example of this problem in Figure~\ref{fig:test}, using tabular Q-learning. As we can see, HER values the direct path to the goal and the square en route to the dangerous square much higher than that path's correct Q-value because it undersestimates the risk. HER learns to take the shorter, more dangerous path and achieves a lower success rate with lower reward than Q-learning.

As suggested in both \citep{UnbiasedMethods} and \citep{CallForResearch}, we derive an approach that allows us to use HER for sampling goals without suffering from these bias problems. We do this by separating the goal used for the reward function ($\gr$) from the goal that is passed to the policy ($\gp$). The value function is conditioned on both goals, but only the reward goal is sampled using HER. This allows us to efficiently learn a successor representation over future achieved goals that we can use for importance sampling. We show that reweighting HER's mean squared Bellman error using this successor representation yields an unbiased estimate of the error.  We call this method Unbiased Sampling for Hindsight Experience Replay (USHER). We demonstrate this approach on an array of stochastic environments, and find that it counteracts the bias shown by HER without compromising HER's sample efficiency or stability.

\section{Definitions}
We define a {\it multi-goal} Markov Decision Process (MDP) as a seven-tuple: state space $S\subseteq \mathbb{R}^n$, action space $A\subseteq\mathbb{R}^m$, discount factor $\gamma\in [0,1]$, transition probability distribution $P(s' \mid s, a)$ (with density function $f(s' \mid s, a)$) for $(s,a,s') \in S\times A\times S$, goal space $G\subseteq\mathbb{R}^l$, goal function $\phi : S \rightarrow G$, and reward function $R: S\times G \rightarrow \mathbb{R}$. A goal $g = \phi(s)\in G$ is a vector of goal-relevant features of state $s\in S$. Goal function $\phi$ is defined a priori, depending on the task. 
A typical example of $\phi(s)$ is a low-dimensional vector that preserves only the entries of state-vector $s$ that are relevant to the goal.
For instance, a mobile robot is tasked with moving to a particular location and arriving there at zero velocity. The state space of the robot would include velocities and orientations of each wheel, along with several other attributes that are needed to control the robot. The goal function would take the full high-dimensional state of the robot and return only its location  and velocity. Therefore, each goal point corresponds to a subspace of the state space in this example. A special case is when $G=S$ and $g=\phi(s), \forall s\in S$.
Note that the immediate reward function $R(s, g)$ depends on a selected goal $g\in G$. Every selection of $g\in G$ produces a valid single-goal MDP. 
We denote by $\pi$ a deterministic goal-conditioned policy, with $\pi(s, g) \in A$ for $s \in S, g \in G$, and define $Q^*(s, a, g)$ to be the unique optimal $Q$-value of action $a\in A$ in state $s \in S$, given selected goal $g \in G$.

In the proposed algorithm and analysis, a policy $\pi$ can be evaluated according to a goal that is not necessarily the same goal used by the policy for selecting actions. Therefore, we use $\gp$ to refer to goals that are passed to policies, and $\gr$ to denote goals that are used to evaluate policies. Using these notations, the Bellman equation is re-written as 
\begin{align*}
    Q^{\pi}(s, a, \gr, \gp) = \mathbb{E}_{s'}[R(s', \gr) + \gamma Q^{\pi}(s', \pi(s', \gp), \gr, \gp) \mid s, a].
\end{align*}
Intuitively, this means ``The expected cumulative discounted sum of rewards $R(s' , \gr)$, when using policy $\pi(s', \gp)$''. 
The reason for this separation is that it allows us to more easily separate the problem of predicting future rewards from the problem of directing the policy. This makes it much easier to find an analytic expression for HER's bias. In particular, it lets us learn an expression for future goal occupancy that is conditioned only on $\gp$ and not $\gr$, which will allow us to correct for the bias induced by hindsight sampling. 
Observe that when $\gr=\gp$, this definition reduces to the Bellman equation for standard multi-goal RL. 
For standard $Q$-learning, $\pi(s',\gp)$ would be $ \arg\max_{a'} Q(s', a', \gp, \gp)$, where both the policy and reward goals are set to $\gp$. 

{\bf HER.}
 \label{Sec:HER}
		HER is a modification of the experience replay method employed by many deep RL algorithms~\cite{HER,DQN,DDPG,SAC,TD3}. 
		Policy goal $\gp$ is sampled before each trajectory begins, and is not changed while generating the trajectory.
		After generating a trajectory, HER stores the entire trajectory in the replay buffer. When sampling transitions $(s, \gp, a, s')$ from the buffer, HER retains the original goal $\gp$ used in the policy that generated the trajectory, i.e., $\gr\leftarrow \gp$, with probability $\frac{1}{k+1}$, where $k$ is a natural number (usually 4 or 8). The rest of the time, it replaces the original goal with $\phi(s_t)$, i.e., $\gr\leftarrow \phi(s_t)$, where $s_t$ is a randomly sampled state from the future trajectory that starts at $s$. Goals that are selected from the future trajectory are referred to as ``hindsight goals''. 
HER then updates the Q-value and policy networks with $(s, \gp, a, s', R(s',\gr))$.
%

\section{Related Work}
\label{related}
		Over the last few years, several methods have attempted to address the hindsight bias induced by HER. ARCHER attempts to decrease HER's hindsight bias by multiplying the loss on hindsight goals and non-hindsight goals by different weights, effectively upweighting the importance of hindsight goals 
 		\cite{ARCHER}. MHER combines a multi-step Bellman equation with a bias/variance tradeoff equation to address the bias induced by the multi-step algorithm  \cite{MHER}. It is worth noting that MHER only attempts to address HER's off-policy bias, not its hindsight bias.  A rigorous mathematical approach to HER's hindsight bias is taken in~\cite{UnbiasedMethods}, by showing that HER is unbiased in deterministic environments, and that one of HER's key benefits is ensuring a minimum density of feedback from the reward function, even in high-dimensional spaces where the reward density would normally be extremely low. 
        This reward-density problem is addressed by deriving a family of algorithms (called the $\delta$-{\it family}, e.g. $\delta$-DQN, $\delta$-PPO), which guarantees a minimum reward density while still being unbiased. 
		These methods do not use HER and have higher variance. 
		The authors of~\cite{UnbiasedMethods} also state that the problem of formulating an unbiased form of HER is still open, and call for additional research into the problem.

		Bias-Corrected HER (BHER) attempts to account for hindsight bias by analytically calculating importance-sampling hindsight goals 
		\cite{BHER}. Unfortunately, we believe that this derivation is incorrect. 
		The proof in BHER relies on the assumption that the probability of a transition is independent of the goal ($f(s' \mid s, a, g) = f(s' \mid s, a)$). This assumption does not hold for HER, because it samples the goal from the future trajectory of $s$, which depends on $s'$. Both our work and~\cite{UnbiasedMethods} give concrete counterexamples to this assumption. The following derivation provides an unbiased solution that does not rely on this flawed assumption. 

\section{Derivation}
\label{derivation}
{\bf Bias in HER.}
We derive the formula of the bias introduced by HER in estimating the Q-value function in the following. 
Let $s, a$, and $s'$ be random variables representing a state, action, and subsequent state in a given trajectory generated by policy $\pi$ with goal $\gp$. Let $T$ be the number of time-steps remaining in the sub-trajectory that starts at $s$. 
Let $Q^{\pi}_{HER}(s, a, \gr, \gp)$ be the solution to the Bellman equation obtained using HER's sampling process of reward goal  $\gr$ (Sec.~\ref{Sec:HER}). This sampling process takes into account both $\gp$ and $T$. Furthermore, $\gr$ is selected from the sub-trajectory that starts at $s$ with probability $\frac{k}{k+1}$. Therefore, the probability $f(s' \mid s, a, \gr, \gp, T)$  of the next state $s'$ after knowing $\gp, \gr$ and $T$ is generally not the same as $f(s' \mid s, a)$, which is what HER uses empirically to estimate $Q^{\pi}_{HER}(s, a, \gr, \gp)$. The following proposition quantifies this bias ratio.

\begin{theorem}
\label{bias_ratio}
Suppose $\gp$ is fixed at the start of the trajectory, and $\gr$ is sampled using HER. Then for any $s', s, a, \gr, \gp, T$,
 $   f(s' \mid s, a, \gr, \gp, T) = \frac{f(\gr \mid s', \pi(s', \gp), \gp, T-1) }{f(\gr \mid s, a, \gp, T)} f(s' \mid s, a).$
\end{theorem}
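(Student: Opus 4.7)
The plan is to obtain the identity as essentially one line of Bayes' rule, and then to justify the three ``dropping-variable'' simplifications one by one using the Markov property and the fact that $\gp$ and $T$ are fixed exogenously before the transition.

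First, I would apply Bayes' rule to the conditional density of $s'$ given $(s,a,\gr,\gp,T)$, moving $\gr$ to the left-hand side of the bar:
\begin{align*}
f(s' \mid s, a, \gr, \gp, T) \;=\; \frac{f(\gr \mid s', s, a, \gp, T)\,f(s' \mid s, a, \gp, T)}{f(\gr \mid s, a, \gp, T)}.
\end{align*}
The denominator already matches the denominator of the target identity, so nothing needs to be done there. The two remaining factors in the numerator must each be simplified.

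Second, I would argue $f(s' \mid s, a, \gp, T) = f(s' \mid s, a)$. This is because $\gp$ is chosen at the start of the trajectory and is not an input to the transition kernel, and $T$ is simply the number of remaining steps after $s$, which is also predetermined and has no causal effect on a one-step transition. So the one-step dynamics reduce to the MDP kernel $f(s' \mid s, a)$.

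Third, I would simplify $f(\gr \mid s', s, a, \gp, T)$ to $f(\gr \mid s', \pi(s', \gp), \gp, T-1)$. The hindsight goal $\gr$ is a deterministic function of the future trajectory rolled out from $s'$ (either via $\phi$ applied to some later state, or equal to $\gp$ with probability $1/(k+1)$). That future trajectory, under policy $\pi(\cdot,\gp)$, is conditionally independent of $(s,a)$ given $s'$ by the Markov property; it also depends on the remaining horizon, which from $s'$ is $T-1$. Hence I can drop $(s,a)$ from the conditioning and replace $T$ by $T-1$. Finally, since $\pi$ is deterministic, $\pi(s',\gp)$ is a measurable function of $(s',\gp)$ already in the conditioning set, so adding it changes nothing: $f(\gr \mid s', \gp, T-1) = f(\gr \mid s', \pi(s',\gp), \gp, T-1)$. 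Combining the three simplifications yields the claimed formula.

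The main subtlety I expect to navigate carefully is the third step, specifically the replacement of $T$ by $T-1$ and the insertion of $\pi(s',\gp)$. The insertion is harmless once one notes determinism of $\pi$, but one has to be explicit that the sampling distribution of the hindsight goal uses the \emph{remaining} horizon from whichever state is the ``current'' one, so the horizon label shifts from $T$ to $T-1$ when we move the reference point from $s$ to $s'$. Everything else is a routine Bayes-rule plus Markov-property argument.
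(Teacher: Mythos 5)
Your proposal follows essentially the same route as the paper's proof: Bayes' rule on the conditional density, dropping $(\gp, T)$ from the one-step kernel via the Markov property and exogeneity of $\gp$, dropping $(s,a)$ from the goal density and shifting the horizon to $T-1$, and inserting $\pi(s',\gp)$ by determinism of the policy. The only cosmetic difference is that the paper spells out the three HER goal-selection cases (future-trajectory, uniform, and $\gr=\gp$) separately when justifying $f(\gr \mid s', s, a, \gp, T) = f(\gr \mid s', \gp, T)$, whereas you fold them into one sentence; the substance is identical.
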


\textbf{Proof:} Appendix (A.3). This identity presents an interesting corollary.
\begin{cor}
Suppose $Q^{\pi}_{HER}(s, a, \gp, \gp)$ satisfies the Bellman equation and the distribution of future achieved goals is absolutely continuous with respect to the goal space for all $s, a, \gp$, and $\pi (s,\gp)=\arg\max_{a'} Q^{\pi}_{HER}(s, a', \gp, \gp)$. Then $Q^{\pi}_{HER}(s, a, \gp, \gp) = Q^*(s, a, \gp)$, where $Q^*$ is the optimal goal-conditioned $Q$-function. 
\end{cor}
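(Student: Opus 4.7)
The plan is to specialize the identity from Proposition~\ref{bias_ratio} to the case $\gr = \gp$. I expect the ratio on the right-hand side to collapse to $1$, so that HER's Bellman equation reduces to the standard Bellman equation under the true transition density $f(s' \mid s, a)$. Combining this with the greedy-policy hypothesis on $\pi$ then turns the equation into the Bellman optimality equation for the single-goal MDP at goal $\gp$, whose unique fixed point is $Q^{*}(\cdot, \cdot, \gp)$.

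The first step is the ratio-collapse claim. Recall that HER samples $\gr$ according to a mixture: with probability $\tfrac{1}{k+1}$ it sets $\gr = \gp$, and with probability $\tfrac{k}{k+1}$ it draws $\gr$ from the distribution of future achieved goals along the sub-trajectory of length $T$. I would treat $f(\gr \mid s, a, \gp, T)$ as a Radon--Nikodym derivative with respect to a dominating measure that adds a Dirac at $\gp$ to the reference measure on $G$. The absolute continuity hypothesis then guarantees that the future-goal component contributes no mass to the atom at $\gp$. Consequently, both $f(\gp \mid s', \pi(s', \gp), \gp, T-1)$ and $f(\gp \mid s, a, \gp, T)$ equal the atomic mass $\tfrac{1}{k+1}$, and their ratio is $1$. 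Proposition~\ref{bias_ratio} then gives $f(s' \mid s, a, \gp, \gp, T) = f(s' \mid s, a)$.

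Plugging this into the HER Bellman equation yields
\[
Q^{\pi}_{HER}(s, a, \gp, \gp) = \mathbb{E}_{s' \sim f(\cdot \mid s, a)}\!\left[\, R(s', \gp) + \gamma\, Q^{\pi}_{HER}(s', \pi(s', \gp), \gp, \gp) \,\right],
\]
with the expectation taken under the true dynamics. Substituting the greedy identity $\pi(s', \gp) = \arg\max_{a'} Q^{\pi}_{HER}(s', a', \gp, \gp)$ turns this into the Bellman optimality equation for the single-goal MDP at goal $\gp$. A standard contraction argument for the Bellman optimality operator (for $\gamma < 1$, or the appropriate variant if $\gamma = 1$) identifies its unique fixed point with $Q^{*}(\cdot, \cdot, \gp)$, which completes the proof.

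The main obstacle is the step where the density ratio is evaluated at the atom $\gr = \gp$: Proposition~\ref{bias_ratio} is written as though $f(\gr \mid \cdot)$ were an ordinary density, whereas HER's sampling distribution mixes an absolutely continuous component with a point mass at $\gp$. The absolute continuity hypothesis is exactly what makes this bookkeeping clean--without it, the continuous component could place density at $\gp$, the atomic masses on the two sides would no longer match, and the bias would generically persist. This is precisely the regime in which the BHER derivation fails (Section~\ref{related}), so I would take care to phrase the argument in terms of Radon--Nikodym derivatives of the mixture measure rather than informal densities.
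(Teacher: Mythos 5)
Your proposal is correct and follows essentially the same route as the paper: evaluate the bias ratio of Proposition~\ref{bias_ratio} at $\gr=\gp$, observe that absolute continuity of the future-achieved-goal distribution forces all mass at the atom $\gr=\gp$ to come from the non-hindsight branch (probability $\tfrac{1}{k+1}$ on both sides, so the ratio is $1$), and then invoke uniqueness of the fixed point of the Bellman optimality equation. Your Radon--Nikodym phrasing is a somewhat more careful rendering of the paper's informal ``infinitesimal probability'' argument, but the substance is identical.
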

\textbf{Proof:} Appendix (A.4). While this establishes that the target value for $Q^{\pi}_{HER}$ is unbiased when $\gr=\gp$, the function approximator for $Q^{\pi}_{HER}$ may still be biased, because values other than $\gr=\gp$ may influence it through the training of the network. Thus, it is possible that the learned $Q^{\pi}_{HER}$ value may remain biased until unacceptably large amounts of data are gathered. Additionally, 
since the density of data is discontinuous, $Q^{\pi}_{HER}$ may be discontinuous and difficult to approximate with a neural network. The rest of this section is devoted to developing an importance sampling method that is guaranteed to be asymptotically unbiased over the entire domain of  $Q$. 

{\bf Unbiased HER.}
To estimate $Q^{\pi}(s, a, \gr, \gp)$, the solution to the unbiased Bellman equation, we use in this work the following expression,
\begin{align*}
    Q^{\pi}(s, a, \gr, \gp) = \mathbb{E}_{s'}[M(s', s, a, \gr, \gp, T)\big(R(s', \gr) + \gamma Q^{\pi}(s', \pi(s', \gp), \gr, \gp)\big) \mid s, a, \gr, \gp, T],
\end{align*}
where $M(s', s, a, \gr, \gp, T)$ is a weight that cancels the bias ratio given in Proposition~\ref{bias_ratio}. Conditioning the expected value over $s'$ on $\gr, \gp$, and $T$ frees us from the constraint that $s'$ needs to be independent of $\gr, \gp$, and $T$. This would allow us to select $\gr$ from the future trajectory of $s$, as HER does. Note that conditioning on $T$, the number of steps left in the trajectory, is necessary because the distribution of goals selected by HER is not time-independent. 

Proposition~\ref{bias_ratio} is useful for understanding what situations may cause HER to be biased, but unfortunately we cannot directly use it for importance sampling. Weighting samples by setting $M(s', s, a, \gr, \gp, T)$ as $\frac{f(\gr \mid s, a, \gp, T)}{f(\gr \mid s', \pi(s', \gp), \gp, T-1) }$ would require $f(\gr \mid s', \pi(s', \gp), \gp, T-1) $ to always be greater than 0, which is not necessarily true. To solve this, we sample a mixture of hindsight goals and goals drawn uniformly from the goal space $G$. Of the goals where $\gr \neq \gp$, a fraction $\alpha$ of our goals will be drawn uniformly from the goal space, and the remaining $1-\alpha$ will be drawn from the trajectory that follows $s$. This results in the following identity, 

\begin{theorem}
\label{w_ratio}
Let $W(s', s, a, \gr, \gp, T) = \frac{f(\gr \mid s, a, \gp, T)}{\alpha f(\gr \mid s, a, \gp, T) + (1-\alpha) f(\gr \mid s', \pi(s', \gp), \gp, T-1)}$. 
Let $\alpha$ be a real value in the range $(0,1]$. Then for any $s', s, a, \gr, \gp$, 
\begin{align*}
    f(s' \mid s, a) = W(s', s, a, \gr, \gp, T)\big( \alpha f(s' \mid s, a)+ (1-\alpha) f(s' \mid s, a, \gp, \gr, T)\big) 
\end{align*}
Furthermore, for any function $F$ of state $s'$,
\begin{align}
\label{unbiased_general_bellman}
       \mathbb{E}_{s'}[F(s') \mid s, a]  = \textrm{ } & \alpha \mathbb{E}_{s'}[W(s', s, a, \gr, \gp, T) F(s') \mid s, a] \nonumber \\  + & (1-\alpha)\mathbb{E}_{s'}[W(s', s, a, \gr, \gp, T) F(s') \mid s, a, \gp, \gr, T].
\end{align}
\end{theorem}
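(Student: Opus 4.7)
The plan is to derive both identities directly from Proposition~\ref{bias_ratio}, treating the first as an algebraic rearrangement and the second as a change-of-measure application of the first.

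First I would take the pointwise identity from Proposition~\ref{bias_ratio},
\begin{align*}
f(s' \mid s, a, \gr, \gp, T) = \frac{f(\gr \mid s', \pi(s', \gp), \gp, T-1)}{f(\gr \mid s, a, \gp, T)}\, f(s' \mid s, a),
\end{align*}
and substitute it into the mixture $\alpha f(s' \mid s, a) + (1-\alpha) f(s' \mid s, a, \gp, \gr, T)$. Factoring out $f(s' \mid s, a)$ leaves the bracketed quantity
\begin{align*}
\frac{\alpha f(\gr \mid s, a, \gp, T) + (1-\alpha) f(\gr \mid s', \pi(s', \gp), \gp, T-1)}{f(\gr \mid s, a, \gp, T)},
\end{align*}
which is exactly $W(s', s, a, \gr, \gp, T)^{-1}$ by definition. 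Multiplying both sides by $W$ yields the first claimed equation. Along the way I would note that because $\alpha \in (0,1]$, the denominator of $W$ is bounded below by $\alpha f(\gr \mid s, a, \gp, T)$, so $W$ is well-defined at every $s'$ for which the numerator of $W$ is nonzero; this avoids the division-by-zero issue that motivated introducing the mixture with a uniform component in the first place.

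For the second identity I would write the unconditional expectation as an integral against $f(s' \mid s, a)$, then replace $f(s' \mid s, a)$ using the identity just proved:
\begin{align*}
\mathbb{E}_{s'}[F(s') \mid s, a] &= \int F(s')\, f(s' \mid s, a)\, ds' \\
&= \int F(s')\, W(s', s, a, \gr, \gp, T)\,\bigl(\alpha f(s' \mid s, a) + (1-\alpha) f(s' \mid s, a, \gp, \gr, T)\bigr)\, ds'.
\end{align*}
Splitting the integral by linearity and recognizing each piece as an expectation under the corresponding conditional density gives the stated mixture of expectations.

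The main obstacle is less algebraic than conceptual: one must be careful that the substitution on the right-hand side of the first identity is valid pointwise for all $(s', s, a, \gr, \gp, T)$ in the support of the sampling scheme, including the edge cases where $f(\gr \mid s', \pi(s', \gp), \gp, T-1) = 0$. In those cases the contribution of the hindsight density $f(s' \mid s, a, \gp, \gr, T)$ vanishes by Proposition~\ref{bias_ratio}, while the uniform component with weight $\alpha$ keeps the denominator of $W$ strictly positive, so the mixture identity still holds. Once this is verified, the second claim follows as a direct integration, and no further probabilistic machinery is needed.
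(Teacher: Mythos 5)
Your proposal is correct and follows essentially the same route as the paper's proof: both derive the pointwise identity by substituting Proposition~\ref{bias_ratio} into the mixture $\alpha f(s' \mid s, a) + (1-\alpha) f(s' \mid s, a, \gp, \gr, T)$, recognizing the resulting factor as $W^{-1}$, and then obtain the expectation identity by writing $\mathbb{E}_{s'}[F(s') \mid s, a]$ as an integral against $f(s' \mid s, a)$ and splitting by linearity. Your added remark on the degenerate case $f(\gr \mid s', \pi(s', \gp), \gp, T-1) = 0$ is a useful clarification the paper leaves implicit, but it does not change the argument.
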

\textbf{Proof:} Appendix (A.5). We can now derive an unbiased variant of HER by applying Proposition~\ref{w_ratio} to Bellman equation. 

\begin{cor}
Suppose $\pi$ is a deterministic policy, $\gr$ is sampled from the previously mentioned mix of hindsight and uniform random goals, and $\gp$. Then for any $s', s, a, \gr, \gp, T$,  
\begin{align}
\label{unbiased_bellman}
    Q(s, a, \gr&, \gp)  = \alpha \mathbb{E}_{s'}[W(s', s, a, \gr, \gp, T) \big(R(s', \gr) + \gamma Q(s', \pi(s', \gp), \gr, \gp)\big) \mid s, a] \nonumber \\ 
    + (1-\alpha)&\mathbb{E}_{s'}[W(s', s, a, \gr, \gp, T) \big(R(s', \gr) + \gamma Q(s', \pi(s', \gp), \gr, \gp)\big) \mid s, a, \gp, \gr, T]
\end{align}
\end{cor}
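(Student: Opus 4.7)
The plan is to apply Proposition~\ref{w_ratio} directly to the goal-conditioned Bellman equation. Recall that for a deterministic policy $\pi$ with goals $\gr$ and $\gp$ treated as fixed parameters, the Bellman equation stated in the Definitions section reads
$$Q(s, a, \gr, \gp) = \mathbb{E}_{s'}\!\left[R(s', \gr) + \gamma Q(s', \pi(s', \gp), \gr, \gp) \,\middle|\, s, a\right],$$
where the expectation is taken with respect to the true environment transition density $f(s' \mid s, a)$.

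First I would define the function
$$F(s') := R(s', \gr) + \gamma Q(s', \pi(s', \gp), \gr, \gp),$$
and observe the key point: once $\gr, \gp$ are regarded as fixed parameters and $\pi$ is deterministic, $F$ is a function of $s'$ alone. This is precisely the shape required to invoke Proposition~\ref{w_ratio}, whose identity in equation~(\ref{unbiased_general_bellman}) applies to any such function of $s'$.

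Next I would invoke Proposition~\ref{w_ratio} with this choice of $F$, yielding
\begin{align*}
\mathbb{E}_{s'}[F(s') \mid s, a] &= \alpha\,\mathbb{E}_{s'}[W(s', s, a, \gr, \gp, T)\,F(s') \mid s, a] \\
&\quad + (1-\alpha)\,\mathbb{E}_{s'}[W(s', s, a, \gr, \gp, T)\,F(s') \mid s, a, \gp, \gr, T].
\end{align*}
Substituting back the definition of $F$ produces the claimed identity~(\ref{unbiased_bellman}) line for line, so the corollary follows in one step.

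The main subtlety to check, and the only genuine obstacle in this chain, is the distinction between $\gr$ as a parameter of the value function $Q$ versus $\gr$ as the random variable appearing in the conditioning of the right-hand expectation. Since $F$ uses the same $\gr, \gp$ that appear in the conditioning event, conditioning on $\{\gp, \gr, T\}$ restricts the distribution over $s'$ but does not introduce any extra randomness into $F$ itself, so no additional sampling correction appears inside $F$. Once this is spelled out, the corollary is essentially a direct specialization of Proposition~\ref{w_ratio}, with the Bellman recursion playing no role beyond providing the particular $F$.
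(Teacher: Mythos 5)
Your proposal is correct and matches the paper's own (implicit) argument: the paper derives this corollary exactly by taking $F(s') = R(s', \gr) + \gamma Q(s', \pi(s', \gp), \gr, \gp)$, which is a function of $s'$ alone once $\gr, \gp$ are fixed and $\pi$ is deterministic, and substituting it into the identity of Proposition~\ref{w_ratio}. Your remark about $\gr$ appearing both as a fixed parameter of $F$ and in the conditioning event is the right subtlety to flag, and handling it as you do is consistent with the paper.
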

This corollary provides us with a simple method of estimating $Q(s, a, \gr, \gp)$ using HER. A similar unbiased expression can be derived for estimating the gradient of the Bellman error with respect to the weights of a $Q$-function network, instead of estimating $Q(s, a, \gr, \gp)$ directly from samples.

{\bf Learning the future goal distribution.}
In order to use the proposed unbiased estimator of the Q-function  with policy and reward goals, we need to compute weight $W$ defined in Proposition~\ref{w_ratio}. This can be achieved by learning future goal distributions 
$f(\gr \mid s, a, \gp, T)$ and $f(\gr \mid s', \pi(s', \gp), \gp, T-1)$, which both correspond to the conditional probability that a given goal $\gr$ will be selected as a hindsight goal by HER. A technique for learning such long-term distributions, introduced in~\cite{UnbiasedMethods}, consists in training a network $f_{\theta}$, with parameters $\theta$, to approximate the density of future goals $f(\gr \mid s, a)$. The following estimator for the gradient is used in~\cite{UnbiasedMethods}, sampling $(s, a, s')$ from transitions, and fixing $g_r$ at the start of each trajectory,
\begin{align*}
     \nabla_\theta \big(\mathbb{E}_{s, a}[-f_\theta(s, a, \phi(s))] + \mathbb{E}_{s, a, s', \gr}[f_\theta(s, a, \gr) (f_\theta(s, a, \gr) - \gamma \text{max}_{a'} f_{\textrm{target}}(s', a', \gr)]\big),
\end{align*}
wherein $f_{\textrm{target}}$ is a copy of $f_\theta$ that is updated separately. This method has however a significantly higher variance than HER~\cite{UnbiasedMethods}. We examine here the source of this variance, and explain how separating the policy and reward goals allows us to avoid this variance problem. One issue with this method that can contribute to variance is that the gradient is separated into two parts: one in which the goal comes from the state ($\phi(s)$), and one in which the goal is sampled at the start of the trajectory ($\gr$). This is a problem, because the gradient at the state-derived goals is strictly negative, while the gradient at the sampled goals is usually positive. In our experiments, this led to a pattern where the value at the state-derived goals would diverge unboundedly, until a goal was sampled sufficiently close to make the value function crash back down to zero, and then the process would repeat again. In other words, it is not guaranteed that $f_\theta$ converges a fixed point for every finite set of trajectories. 

One way to avoid this problem would be to have a fixed, non-zero chance that $\phi(s) = \gr$, so that $f_\theta$ always converges to a fixed point given any set of training trajectories. We use HER to achieve this outcome. This is possible, unlike in~\citep{UnbiasedMethods}, because we can use the importance sampling method derived above to sample a mixture of HER goals and goals independent of the state. Since HER draws from the future states of $s$, observe that $f(\gr \mid s, a, \gp, T)$ is in fact a {\it successor representation}~\citep{Dayan93}, using an average-reward formulation (because the probability of selecting any of the next T states is uniform). Observe that we can define this probability as 
$$    f(\gr \mid s, a, \gp, T) = \mathbb{E}_{s'} [\frac{1}{T} \delta(\gr - \phi(s')) + (1-\frac{1}{T}) f(\gr \mid s', \pi(s', \gp), \gp, T-1) \mid s, a],$$
wherein $\delta$ is Dirac delta function. This results in the loss gradient:
\begin{align}
\label{loss_gradient}
     \nabla_\theta \Big( \mathbb{E}_{s, a, \gr, \gp, T}[-\frac{2}{T}f_\theta(s, a, \phi(s), \gp, T)  +  \mathbb{E}_{s'} [ L(s, a, s', \gr, \gp, T) \mid s, a]]\Big),
\end{align}
$L(s, a, s', \gr, \gp, T) \triangleq f_\theta(s, a, \gr, \gp, T) \big(f_\theta(s, a, \gr, \gp, T) - \gamma f_{\textrm{target}}(s', \pi(s', \gp), \gr, \gp, T-1)\big)$. While $f_\theta$ may not be a true probability density (because it may not integrate to $1$), this does not matter for our purposes, as this factor will divide out when we calculate $W$.
Finally, we inject the formula in Equation~\ref{loss_gradient} into Equation~\ref{unbiased_general_bellman}, while replacing $F$ with $f_{\theta}$, to derive the following unbiased loss gradient,
\begin{align}
\label{f_loss_gradient}
    \nabla_\theta \mathcal{L} &= \nabla_\theta \mathbb{E}[  -\frac{2}{T}f_\theta(s, a, \phi(s), \gp, T) \nonumber+
    \alpha \mathbb{E}_{s'}[W(s, a, s', \gr, \gp, T) L(s, a, s', \gr, \gp, T) \mid s, a] \nonumber \\
    &+(1-\alpha) \mathbb{E}[W(s, a, s', \gr, \gp, T) L(s, a, s', \gr, \gp, T) \mid s, a, \gr, \gp, T]].
\end{align}

Note that the values of $\alpha$ we use for learning $Q_\theta$ (Equation~\ref{unbiased_bellman}) and goal distribution densities $f_\theta$ (Equation~\ref{f_loss_gradient}) can be different. For discrete environments, we can learn the future distribution of the goal state using simple tabular methods, such as tabular successor representations. 

\vspace{-0.2cm}
\newcommand{\pluseq}{\mathrel{+}=}
\section{Algorithm and Implementation}
\vspace{-0.2cm}
USHER may be implemented atop DDPG \cite{DDPG}, SAC \cite{SAC}, TD3 \cite{TD3}, or any other continuous RL algorithm, as it only changes the loss function for training the goal-conditioned Q-value network. In our experiments, we use SAC as a base.
USHER calculates the loss as follows: It samples a batch of transitions $(s, a, s', \gp, T)$ from the replay buffer, along with two sets of goals: $\gr$, which is drawn from the future distribution of $s$, and $\gr'$, which is drawn uniformly from the goal space $G$. For each set of goals, we calculate two values of $W$. For weighting the $Q$-values, we use $\alpha_Q=0.01$, and for weighting the $f$-values, we use $\alpha_f=0.5$.
We omit the full training loop here, as it is identical to standard HER except for the loss computation.

We make a few minor adjustments to the derived formula in order to minimize the variance induced by importance sampling and ensure numerical stability.
In order to minimize the variance induced by importance sampling, we clip the importance sampling fraction to the range $[\frac{1}{1+c}, 1+c]$, where $c$ is a hyperparameter. This allows us to make a bias/variance trade-off between hindsight bias and the variance induced by importance sampling. 
We find that performance is best for $c \approx 0.3$, and that the bias induced by clipping is negligible for $c > 1.0$ for most environments. We apply this to the importance sampling weights for $Q$ only ($W_{\alpha_Q}$ and $W_{\alpha_Q}'$), not $f$. 
We approximate $W$ using $f_\theta$ for all experiments. In order to reduce the total number of neural network evaluations, we made $Q_\theta$ and $f_\theta$ two heads of a two-headed neural network. Although this choice conditions the value function on $T$, note that by the definition of the $Q$ function, $Q(s, a, \gr, \gp) = \mathbb{E}[\sum_{t=0}^\infty \gamma^t R^t \mid s, a, \gr, \gp] = \mathbb{E}_T[\mathbb{E}[\sum_{t=0}^\infty \gamma^t R^t \mid s, a, \gr, \gp, T]] =  \mathbb{E}_T[Q(s, a, \gr, \gp, T)]$. Thus, training the policy on the $T$-conditioned value function should result in the policy receiving the same gradient in expectation. We found that this trick did not noticeably impact the behavior of the $Q$ function or the policy.

\begin{algorithm}[tb]
\small
\thinmuskip=0mu
  \caption{USHER}
  \label{alg:example}
\begin{algorithmic}
  \STATE {\bfseries Input:} Replay Buffer $B$, Two-headed Critic Network with weights $\theta$, Actor Network with weights $w$, Weighting Factor $\alpha_Q$ for $Q$ , Weighting Factor $\alpha_f$for $f$, Goal Space $G$, Goal Function $\phi$;
  \STATE Sample batch of tuples $(s, a, s', \gp, T)$ from $B$; $critic\_loss \leftarrow 0$; $actor\_loss  \leftarrow 0$;
  \FOR{{\bf each} sampled tuple $(s, a, s', \gp, T) \in B$ }
    \STATE Sample $\gr$ as $\gp$, with probability $\frac{1}{k+1}$, and uniformly from the future trajectory that starts at $s$ with probability $\frac{k}{k+1}$ ($k$ is a pre-defined number); Sample alternative goal $\gr'\sim \textrm{Uniform}(G)$;
  \STATE $target\_q =R(\phi(s'), \gr) + Q_{\textrm{target}}(s', \pi(s', \gp), \gr, \gp, T - 1)$; // $Q_{\textrm{target}}$ is a copy of $Q_{\theta}$
  \STATE $target\_q' =R(\phi(s'), \gr') + Q_{\textrm{target}}(s', \pi(s', \gp), \gr', \gp, T - 1)$; 
  \STATE Define $W(s, a, s', \gp, \gr, T, \alpha) = \frac{f_\theta(\gr \mid s, a, \gp, T)}{\alpha f_\theta(\gr \mid s, a, \gp, T) + (1-\alpha) f_{\textrm{target}}(\gr \mid s', \pi(s', \gp), \gp, T - 1)}$;
    \STATE Set $W_{\alpha_f} = (1-\alpha_f) W(s, a, s', \gp, \gr, T, \alpha_f)$,
    $W_{\alpha_f}' = \alpha_f W(s, a, s', \gp, \gr', T, \alpha_f)$,
    $W_{\alpha_Q} = (1-\alpha_Q) W(s, a, s', \gp, \gr, T, \alpha_Q)$, and
    $W_{\alpha_Q}' = \alpha_Q W(s, a, s', \gp, \gr', T, \alpha_Q)$;

\STATE $critic\_loss \mathrel{+}= \Big(W_{\alpha_f} \big(f_\theta(\gr \mid s, a, \gp, T)^2  - 2f_\theta(\gr \mid s, a, \gp, T)f_{\textrm{target}}(\gr \mid s', \pi(s', \gp), \gp, T)\big) \Big)$ 
$ + \Big( W_{\alpha_f}' \big(f_\theta(\gr' \mid s, a, \gp, T)^2 - 2f_\theta(\gr' \mid s, a, \gp, T)f_{\textrm{target}}(\gr' \mid s', \pi(s', \gp), \gp, T - 1)\big) \Big)$
$ - \Big(\frac{2}{T} f_\theta(\phi(s') \mid s, a, \gp, T) \Big)$
$ + \Big( W_{\alpha_Q} (Q_\theta(s, a, \gr, \gp, T) - target\_q)^2 \Big)$
$ + \Big( W_{\alpha_Q}' (Q_\theta(s, a, \gr', \gp, T) - target\_q')^2 \Big)$;
  \STATE $actor\_loss \mathrel{+}= -Q_\theta(s, \pi(s, \gp), \gp, \gp, T)$;
  \ENDFOR
  \STATE Backprop $critic\_loss$ and update $\theta$; Backprop $actor\_loss$ and update $w$;
\end{algorithmic}
\end{algorithm}



\vspace{-0.3cm}
\section{Experiments}
\vspace{-0.3cm}
{\bf Tasks.} We evaluate the proposed algorithm in the following environments, illustrated in Figure~\ref{fig:tasks}. (1) {\it Discrete}. We first demonstrate our method in the discrete case in Figure~\ref{fig:test} because it is analytically tractable and allows us to verify that USHER learns the correct value function. The environment used has a short, risky path that has a high chance of disabling the robot, and a longer risk-free path. The longer path has a higher expected reward, but we will demonstrate that HER mistakenly prefers the riskier path. 
(2) \textit{4-Torus with Freeze}. This environment was introduce in~\citep{UnbiasedMethods} to demonstrate HER's bias. Robots take steps on a continuous N-dimensional torus to reach a location. They also have a "freeze" action that will teleport them to a random location, but freeze them in place for the rest of the episode. HER learns to always take the freeze action. 
(3) \textit{Car with Random Noise} and (4) \textit{Red Light}. These two environments use the "Simple Car" dynamics described in~\citep{lavalle:2006}. In \textit{Car with Random Noise}, the robot must navigate around walls while subject to  Gaussian action noise \cite{PlanningAlgorithms}. In \textit{Red Light}, the robot must learn to safely navigate a traffic light to reach its goal. 
(5) \textit{Fetch}. These three environments task a robot manipulating a robot arm to reach a point, push an object, and slide an object to a point outside of the robot's reach, respectively. 
We use these to demonstrate the USHER has comparable performance to HER on deterministic, high-dimensional environments. 
(6) \textit{Mobile Throwing Robot}. We design a simulated robot arm on a mobile base, and task it with throwing a ball to a randomly selected location. There is also a 50\% chance of wind that can blow the ball off course. 
(7) \textit{Navigation on a physical mechanum robot}. Lastly, we train a mechanum robot to navigate around obstacles to reach a goal and deploy it on a physical robot. The terrain contains a high friction zone that leads to the goal faster, but unreliably. Transfer was done by rolling out trajectories in simulation, and then deploying the same sequence of actions on the physical robot as an open-loop control. 
\begin{figure*}
	\begin{tabular}{@{\hskip0pt}c@{\hskip0pt}c@{\hskip0pt}c@{\hskip0pt}}
	  \vspace{-0.05cm}
 		\includegraphics[width=0.28\textwidth]{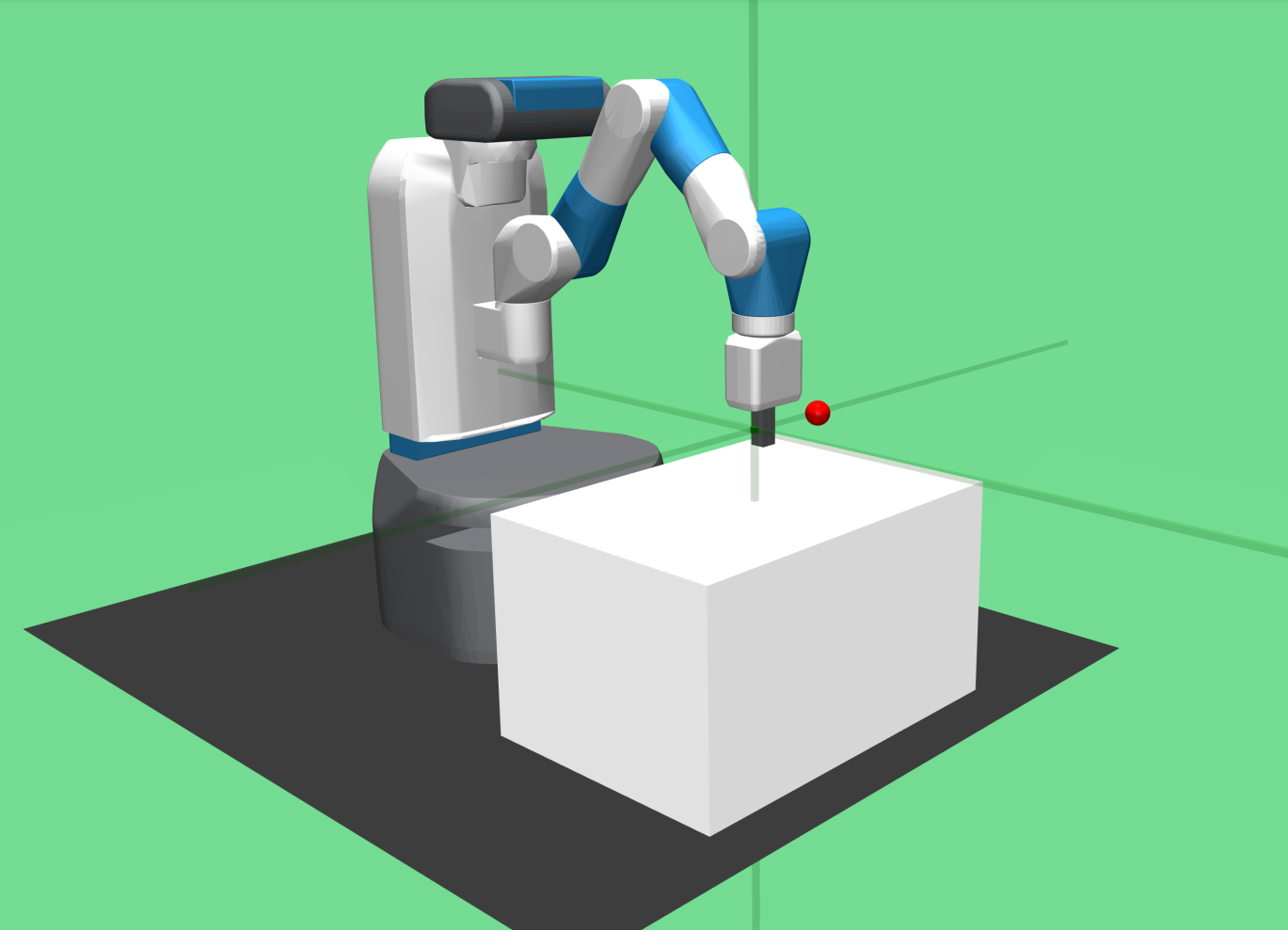} &
 		\hspace{-0.1cm}
 		\includegraphics[width=0.28\textwidth]{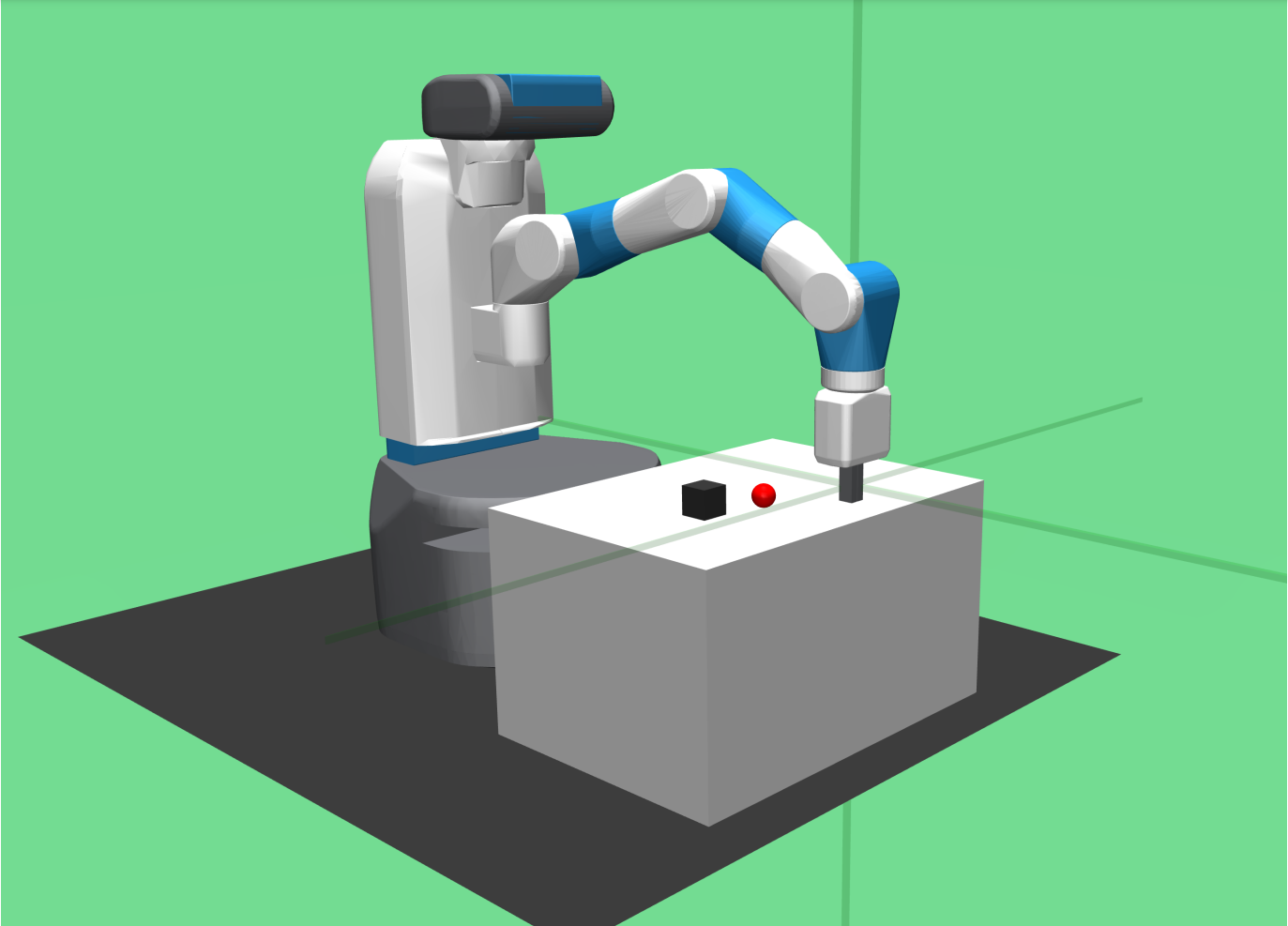}  &
 		\hspace{-0.1cm}
 		\includegraphics[width=0.28\textwidth]{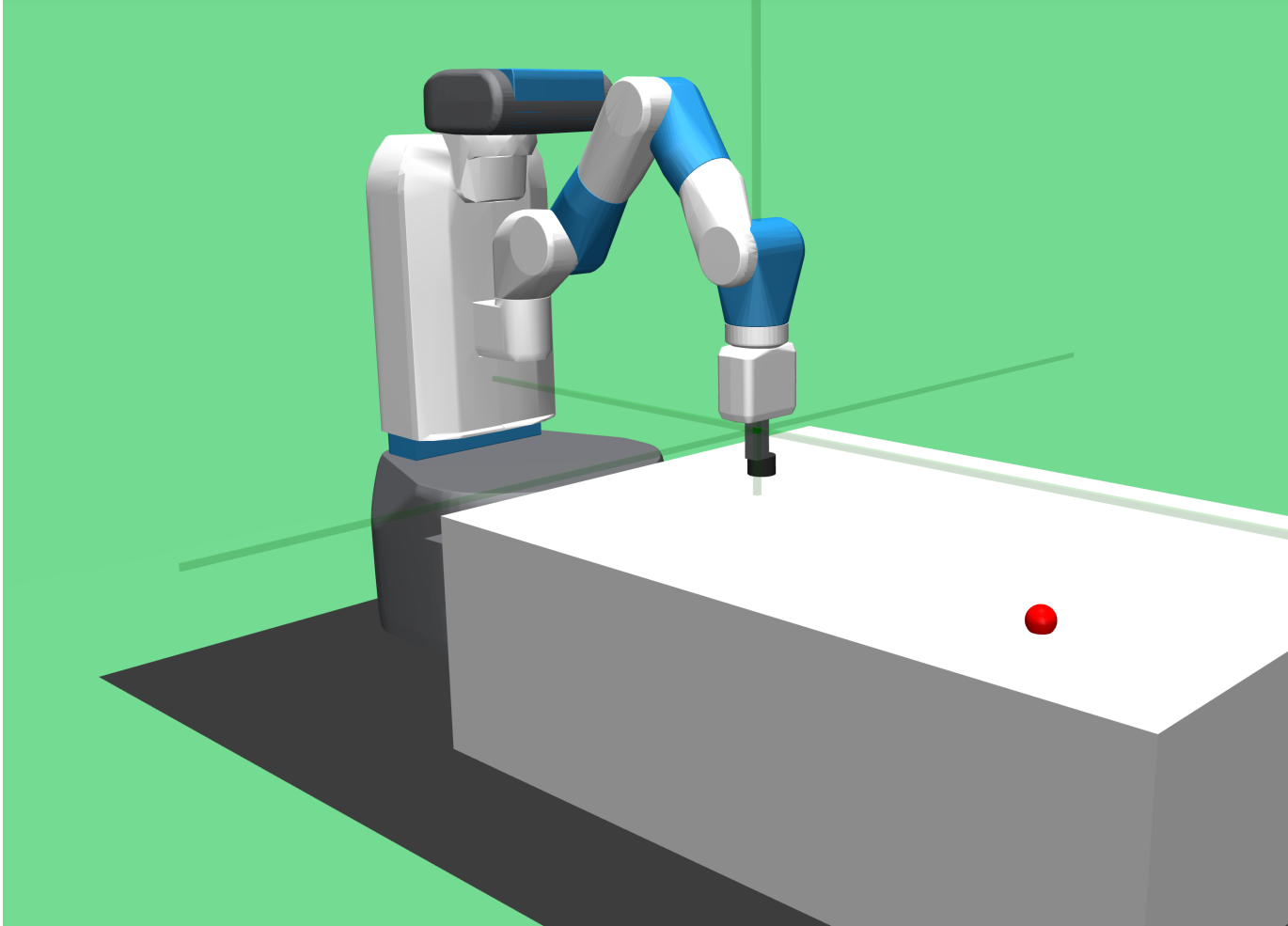} \\
 		\vspace{-0.05cm}
 		(a) & (b) & (c) \\
 		 		\includegraphics[width=0.28\textwidth]{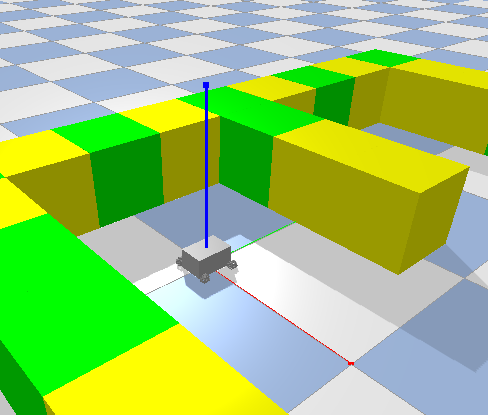} &
 		\hspace{-0.1cm}
 		\includegraphics[width=0.28\textwidth]{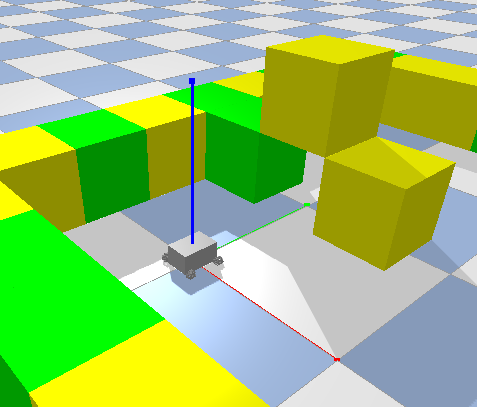}  &
 		\hspace{-0.1cm}
 		\includegraphics[width=0.25\textwidth]{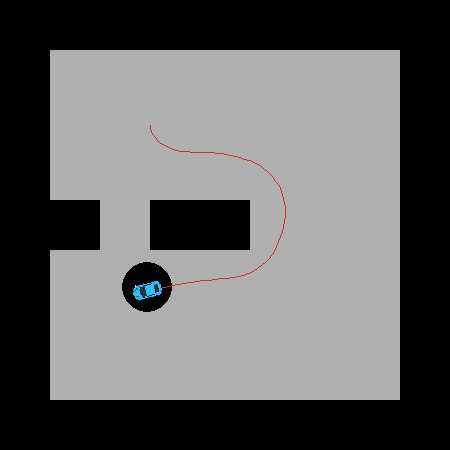} \\
 		\vspace{-0.05cm}
 		(d) & (e) & (f) \\
 		 \includegraphics[width=0.32\textwidth]{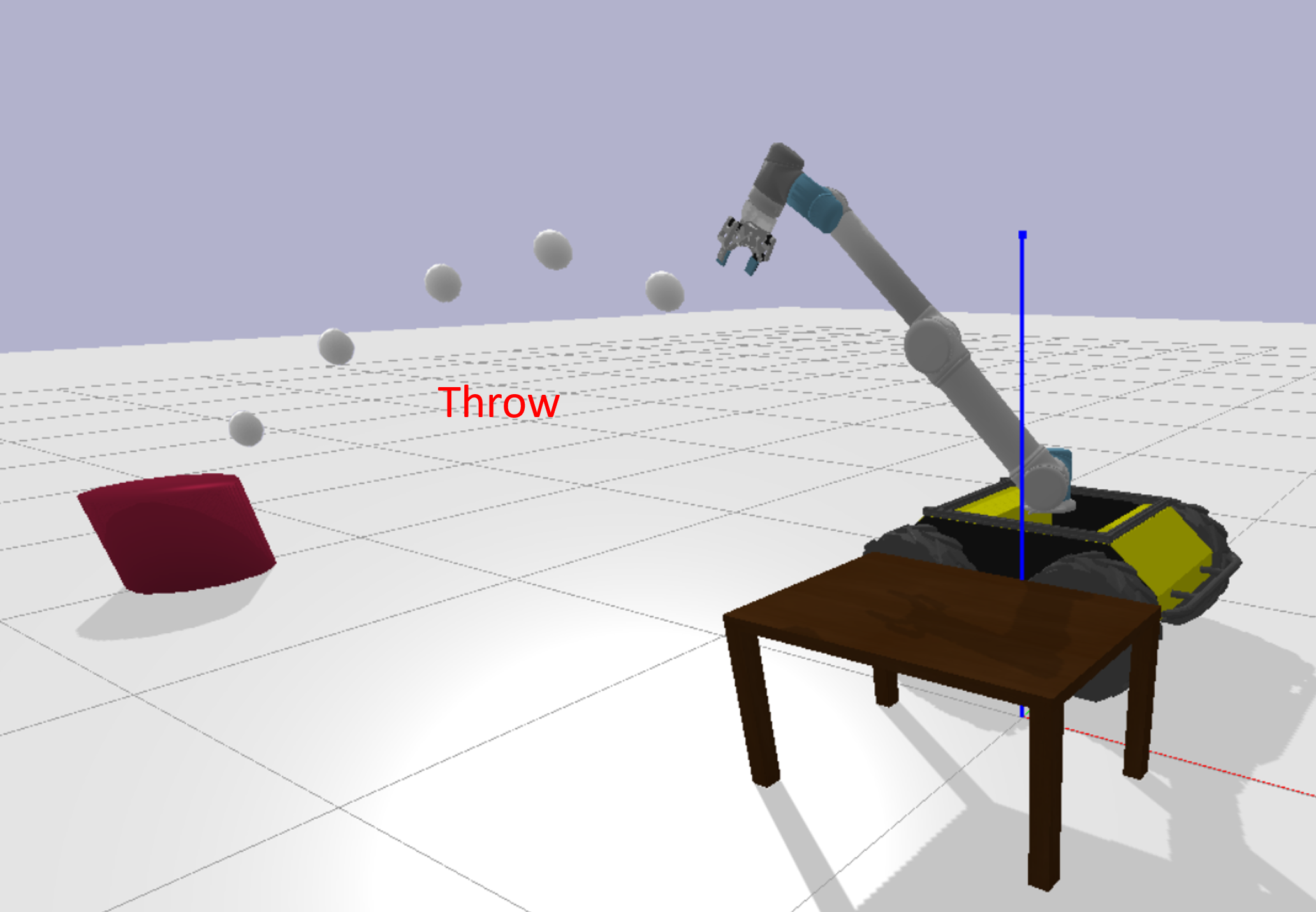} &
 		\hspace{-0.1cm}
 		\includegraphics[width=0.32\textwidth]{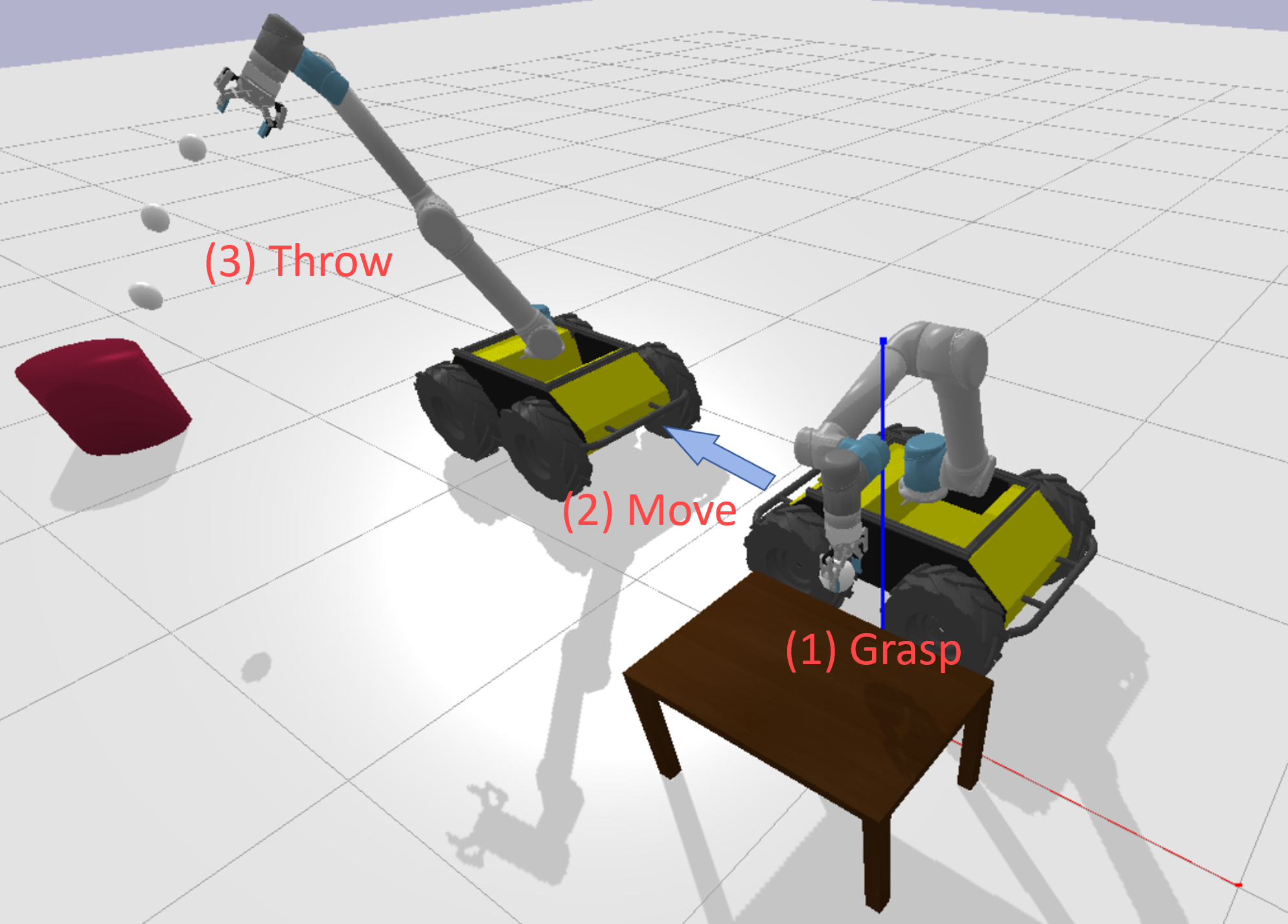}  &
 		\hspace{-0.1cm}
 		\includegraphics[width=0.34\textwidth]{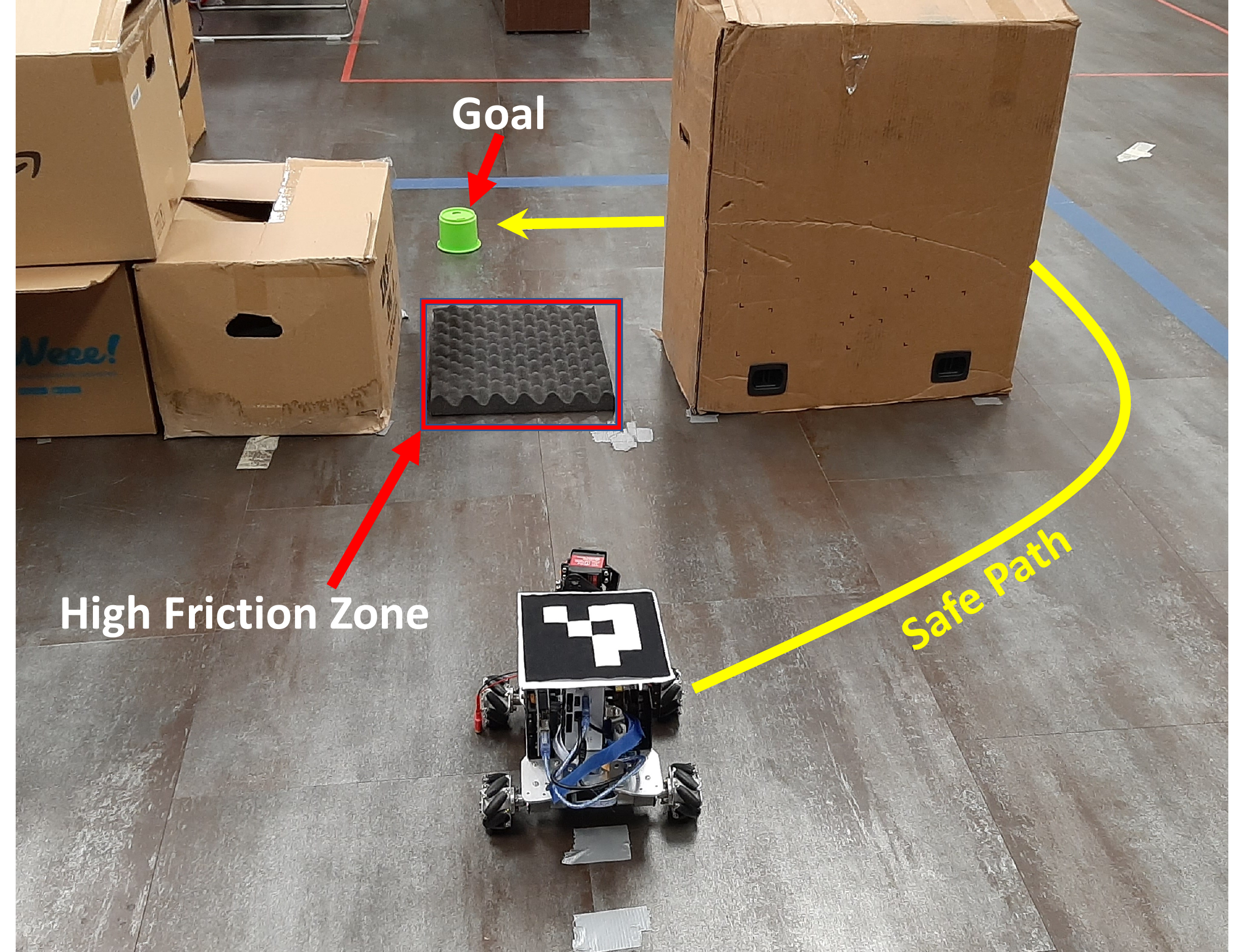} \\
 		\vspace{-0.05cm}
 		(g) & (h) & (i) \\
	\end{tabular}
    \caption{\small Some of the robotic tasks used in the experiments. (a) Fetch reach. (b) Fetch push. (c) Fetch slide. (d,e)  Simulated mechanum robot with random obstacle in two different positions. (f) Car with random noise.  (g, h) Mobile Throwing Robot. (i) Physical Mechanum Robot.}
    \vspace{-0.5cm}
    \label{fig:tasks}
    \vspace{-0.2cm}
\end{figure*}

{\bf Evaluation Metrics.}
On all tasks, we report the success rate, the average reward, and the average bias at the starting state (measured as the average cumulative discounted reward minus the value at the starting state). For all experiments (except Discrete and physical and simulated Mechanum environments), we report the sample mean and confidence interval, measured using five training runs for each method with randomly initialized seeds. 

{\bf Summary of Results.} Figure~\ref{fig:quan_all} summarizes the results of the experiments.  
(1) \textit{Discrete}. The value functions for USHER and Q-learning both quickly converge to the expected value, while HER overestimates the expected reward. USHER and Q-learning both learn to take the long, safe path, while HER takes the short, risky path. 
(2) \textit{4-Torus with Freeze}. HER learns to always take the freeze action and fails as a result, while USHER learns a successful policy. DDPG and $\delta$-DDPG are unbiased in this environment, but DDPG struggles due to the difficulty of exploring in high dimensions, and $\delta$-DDPG struggles with its variance.
(3) \textit{Car with Random Noise}. HER performs well for low noise values, but tends to overestimate values more as the noise level rises. USHER suffers significantly less from high noise levels than HER. 
(4) \textit{Red Light}. HER learns to run the red light, while USHER waits for the red light to end. USHER achieves higher success rates and rewards. 
(5) \textit{Fetch}. USHER is able to match HER's performance on all of the tested environments.  This suggests that the importance sampling method does not significantly affect USHER's variance or sample efficiency in deterministic environments, where HER is known to be unbiased. It also significantly outperforms two other unbiased methods, DDPG and $\delta$-DDPG on FetchReach. 
(6) \textit{Mobile Throwing Robot}. USHER matches HER's sample efficiency until the point where HER's bias causes its performance to suffer. USHER's performance, by contrast, continues to grow steadily to a 75\% success rate, significantly better than HER's 55\%. Interestingly, we find that USHER actually underestimates its reward here. This is likely because this environment is slightly non-Markovian, because the wind is sampled at the beginning of each trajectory, and then remains fixed. USHER's proof of unbiasedness, however, assumes that the environment is Markovian. It is interesting to note that USHER still performs well even when this property does not completely hold. 
(7) \textit{Navigation on a physical mechanum robot}. We find that USHER outperforms HER. Both robots take the short goal when it is open. When the path is blocked, HER repeatedly slams into the obstacle. By contrast, USHER runs into the block once, and then turns to go around it if it is blocked. This leads USHER to have a higher success rate. In simulation, HER's success rate is approximately 50\%, while USHER's is near 100\%. Due to the difficulty of transfer, USHER's performance drops on the physical robot, but it still outperforms HER. HER succeeded on 4/10 goals, while USHER succeeds on 6/10. Both methods succeed 100\% of the time on the unblocked path environment. 


\begin{figure*}[h]
	\begin{tabular}{@{\hskip0pt}c@{\hskip0pt}c@{\hskip0pt}c@{\hskip0pt}c@{\hskip0pt}}
	  \hspace{-0.3cm}\vspace{-0.05cm}
 		\includegraphics[width=0.28\textwidth]{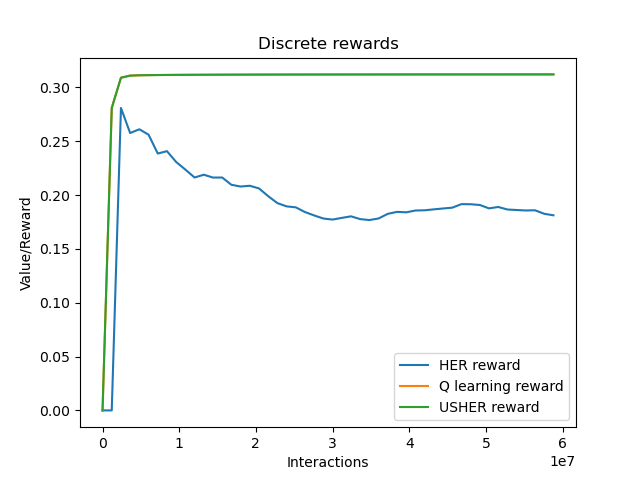} &
 		\hspace{-0.45cm}
 		\includegraphics[width=0.28\textwidth]{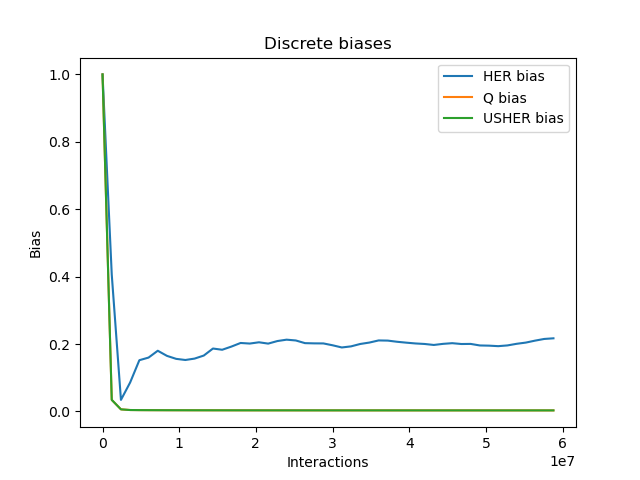} &
 		\hspace{-0.45cm}
 		\includegraphics[width=0.28\textwidth]{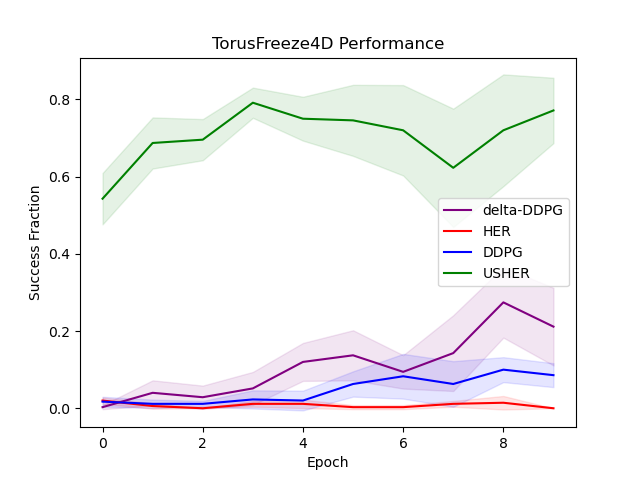} &
 		\hspace{-0.45cm}
 		\includegraphics[width=0.28\textwidth]{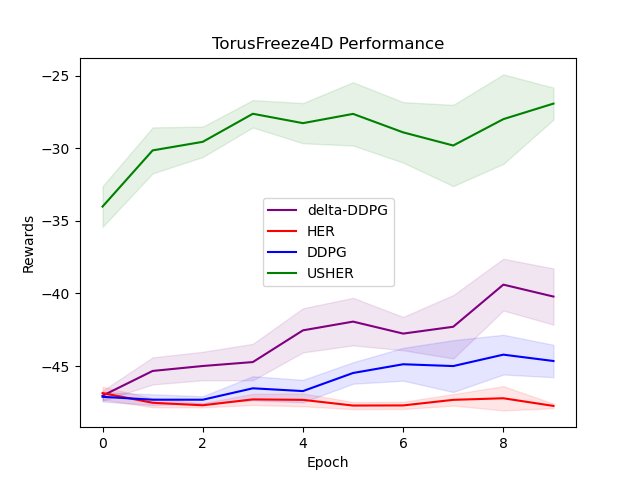} \\
 		\vspace{-0.05cm}
 		(a) & (b) & (c) & (d)\\
 		\hspace{-0.3cm}
		\includegraphics[width=0.28\textwidth]{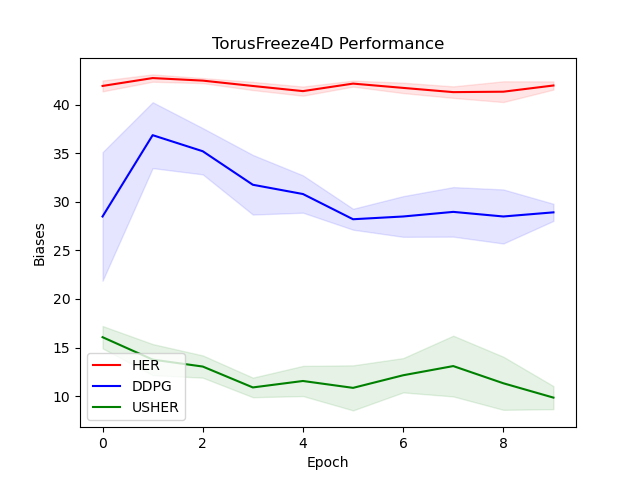} &
		\hspace{-0.45cm}
 		\includegraphics[width=0.28\textwidth]{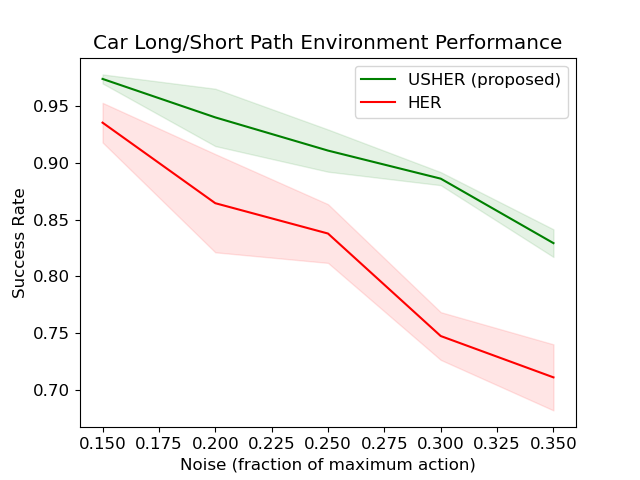} &
 		\hspace{-0.45cm}
 		\includegraphics[width=0.28\textwidth]{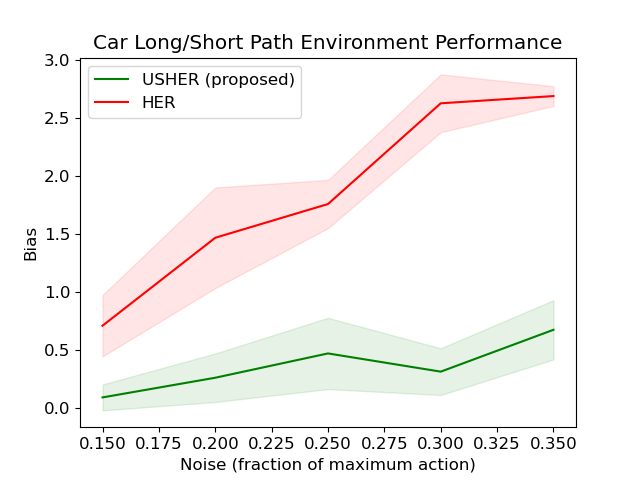} &
 		\hspace{-0.45cm}
 		\includegraphics[width=0.28\textwidth]{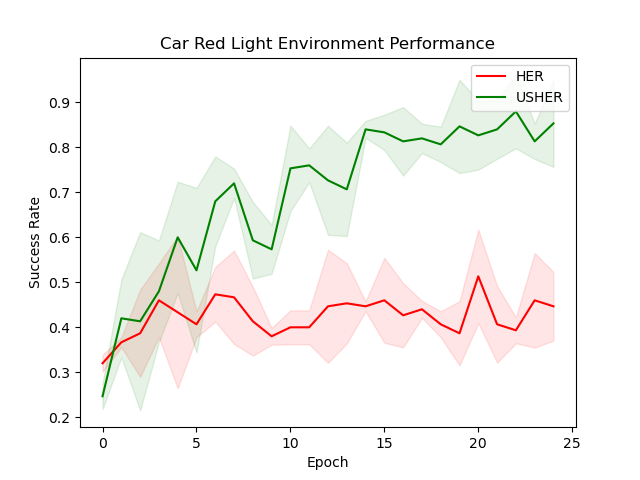}\\
 		\vspace{-0.03cm}
 		(e) & (f) & (g) & (h)\\
 		\hspace{-0.3cm}
 		\includegraphics[width=0.28\textwidth]{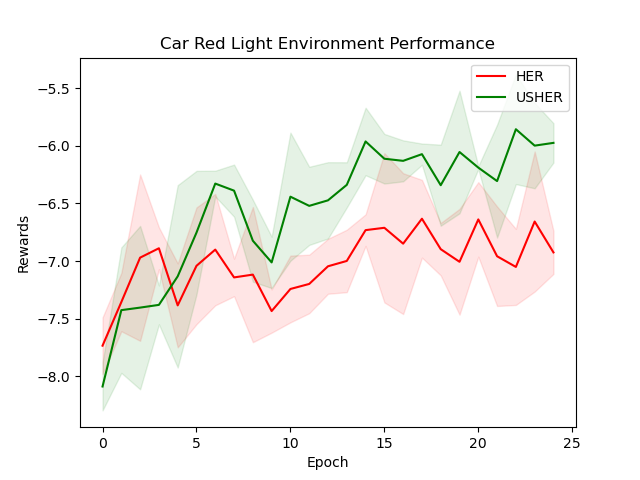} &
 		\hspace{-0.45cm}
 		\includegraphics[width=0.28\textwidth]{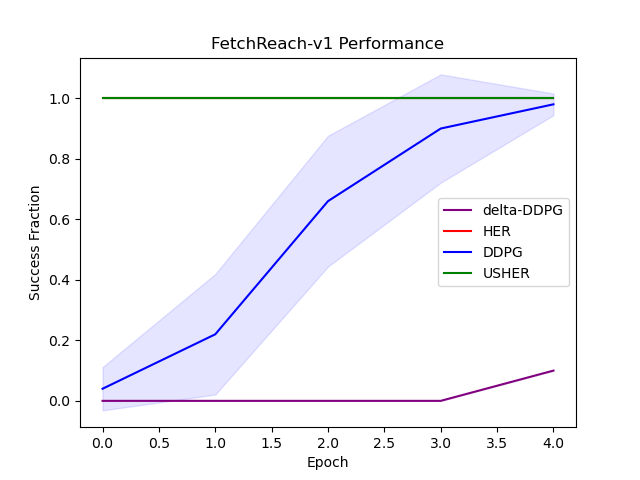} &
 		\hspace{-0.45cm}
		\includegraphics[width=0.28\textwidth]{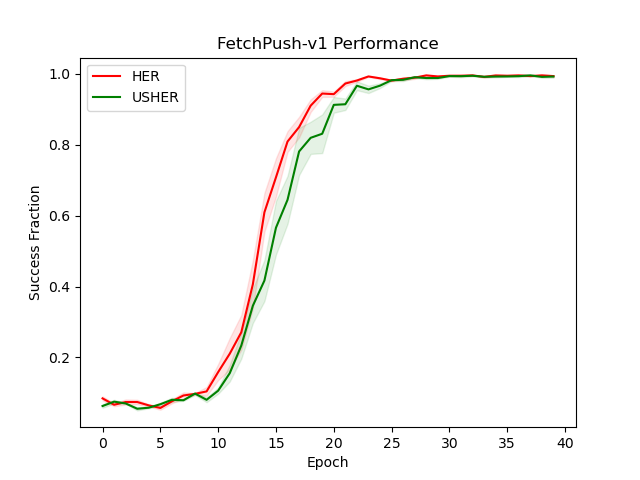} &
		\hspace{-0.45cm}
 		\includegraphics[width=0.28\textwidth]{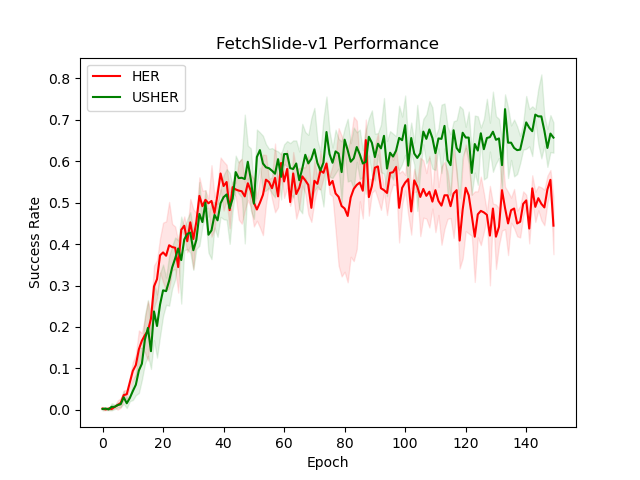} \\
 		\vspace{-0.05cm}
 		(i) & (j) & (k) & (l)\\
 		\hspace{-0.3cm}
		\includegraphics[width=0.28\textwidth]{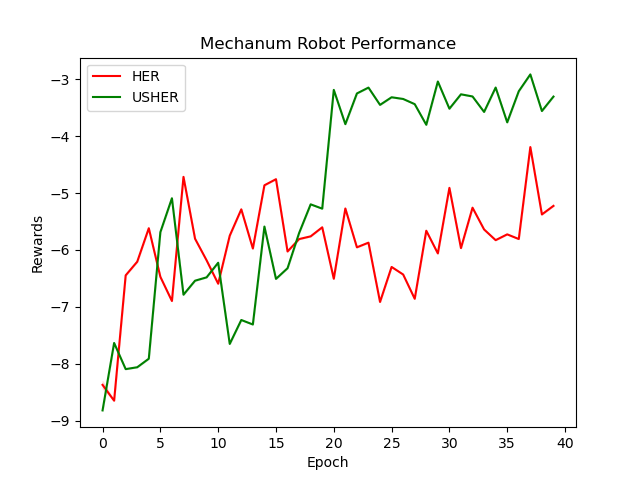} &
		\hspace{-0.45cm}
		\includegraphics[width=0.28\textwidth]{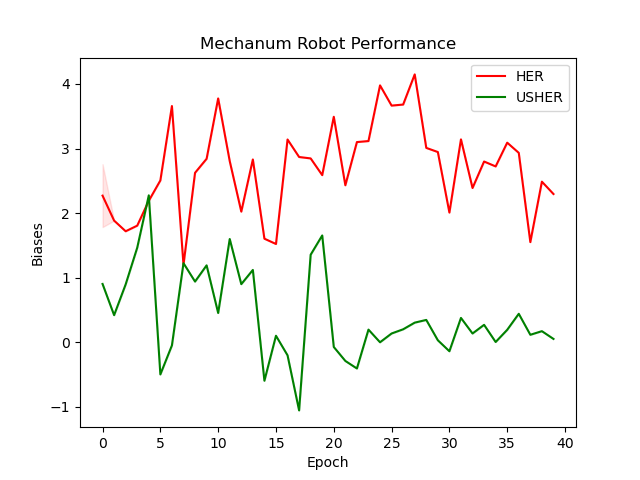} &
		\hspace{-0.45cm}
		\includegraphics[width=0.28\textwidth]{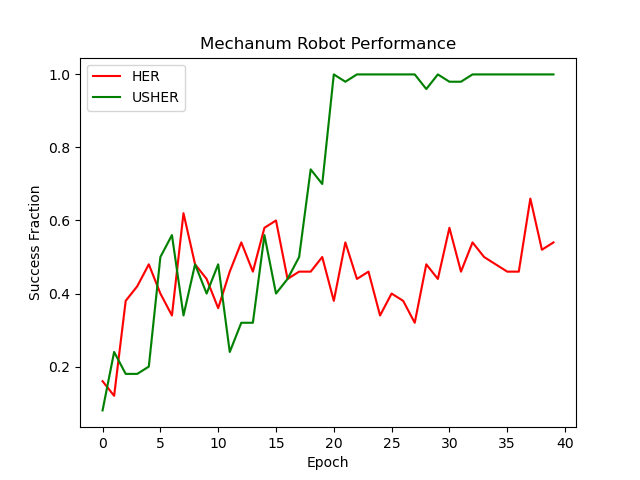} &
		\hspace{-0.45cm}
		\includegraphics[width=0.28\textwidth]{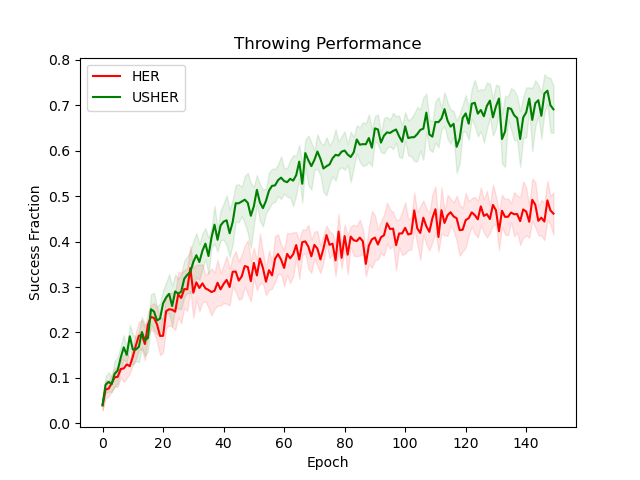} \\
		\vspace{-0.05cm}
 		(m) & (o) & (p) & (q)\\
 		 		\hspace{-0.3cm}
		 &
		\hspace{-0.45cm}
		\includegraphics[width=0.28\textwidth]{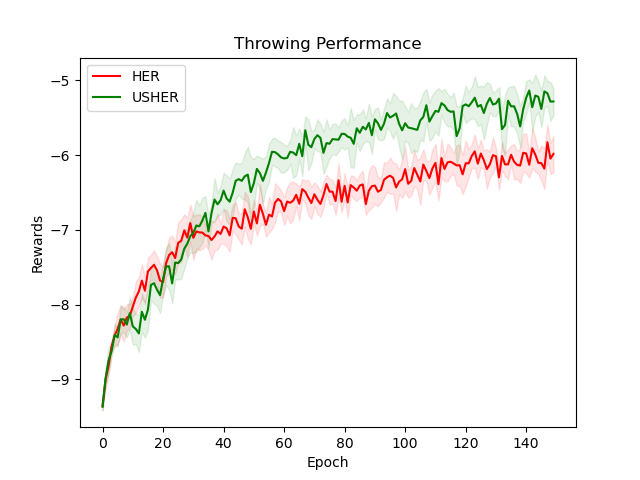} &
		\hspace{-0.45cm}
		\includegraphics[width=0.28\textwidth]{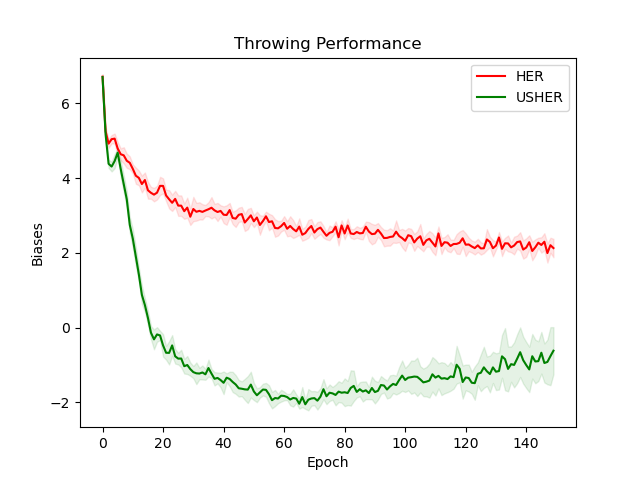}\\
		\vspace{-0.05cm}
 		 & (r) & (s) &\\
	\end{tabular}
\vspace{-0.2cm}
    \caption{Average reward, success rate, and Q-value bias of HER and USHER on the different tasks.}
    \vspace{-0.7cm}
    \label{fig:quan_all}
\end{figure*}

\vspace{-0.2cm}
\section{Limitations}
\vspace{-0.2cm}
One limitation of this work is that we rely on the Markov assumption to derive our importance sampling weights. This means that while we can correctly estimate the value function for stochastic transitions, we cannot guarantee that the learned value is correct in environments with hidden information. It is unclear whether this is actually an issue in practice, as USHER still outperforms HER on the non-Markovian environments we tested (such as the Throwing Bot). Additionally, USHER requires approximately $2.5$ times as many neural net evaluations as HER does per batch update. This was not an issue in our experiments, as the cost of simulation and policy evaluations usually dominated the training time. 
\vspace{-0.3cm}
\section{Conclusion}
\vspace{-0.3cm}
We derive an unbiased importance sampling method for HER, and show that it is able to effectively counteract HER's hindsight bias. We find that addressing this bias leads to higher success rates and rewards in a range of stochastic environments.  Furthermore, we introduce a mathematical framework to justify our method which can be used to examine the situations where HER is likely to experience significant bias. In future work, we hope to examine the finite-sample case, in order to better understand whether HER introduces a bias there, and if so, how it could be corrected. 




\clearpage


\bibliography{references}  

\pagebreak

\appendix

\section{Appendix}


\subsection{Experimental Details}
All experiments were performed on an Alienware-Aurora-R9 with 8 Intel i7-9700 cores. We did not use a GPU. Code for the experiments is provided in the supplementary materials

\subsubsection{Discrete Case} 
We perform 1000 episodes, using trajectories of 30 steps, a $k$ of 8, and a $\gamma$ of 0.825. Our learning rate begins at 0.01 and decays as $O(n^{-\frac{3}{4}})$.

\subsubsection{For all continuous-space experiments}
NNs: We use four-layer neural nets with RELU activations and 256 hidden units for both the actor and critic networks \\
Optimizer: Adam with default params\\
Learning rate: .001 for both actor and critic \\
Polyak averaging coefficient: .95 \\
K (number of hindsight goals per non-hindsight goal): 8 \\
Batch-size: 256\\
$\alpha_Q$: 0.1 \\
$\alpha_f$: 0.5 \\

\textit{N-torus with Freeze}\\
$\gamma: .98$ \\ 
Trajectory length: 50 \\ 
Batches per epoch: 4\\
Episodes per epoch: 500\\
Entropy Regularization: 0.001 \\
\textit{Car with Random Noise}\\
$\gamma: .95$ \\ 
Trajectory length: 20 \\ 
Batches per epoch: 5\\
Episodes per epoch: 500\\
Entropy Regularization: 0.01 \\
\textit{RedLight}\\
$\gamma: .9$ \\ 
Trajectory length: 50 \\ 
Batches per epoch: 4\\
Episodes per epoch: 500\\
Entropy Regularization: 0.01 \\
The light pattern was green: 1 second, yellow: 1 second, red: 4 seconds, with a randomized starting color. 
\textit{FetchReach}\\
$\gamma: .98$ \\ 
Trajectory length: 50 \\ 
Batches per epoch: 40\\
Episodes per epoch: 50\\
Entropy Regularization: 0.001 \\
\textit{FetchPush}\\
$\gamma: .98$ \\ 
Trajectory length: 50 \\ 
Batches per epoch: 40\\
Episodes per epoch: 50\\
Entropy Regularization: 0.01 \\
\textit{FetchSlide}\\
$\gamma: .98$ \\ 
Trajectory length: 50 \\ 
Batches per epoch: 40\\
Episodes per epoch: 50\\
Entropy Regularization: 0.001 \\
\textit{Mobile Throwing Robot}\\
$\gamma: .9$ \\ 
Trajectory length: 20 \\ 
Batches per epoch: 100\\
Episodes per epoch: 50\\
Entropy Regularization: 0.001 \\
\textit{Mechanum robot -- simulator}\\
$\gamma: .925$ \\ 
Trajectory length: 50 \\ 
Batches per epoch: 100\\
Episodes per epoch: 50\\
Entropy Regularization: 0.01 \\
\textit{Mechanum robot -- analytic model}\\
$\gamma: .975$ \\ 
Trajectory length: 50 \\ 
Batches per epoch: 10\\
Episodes per epoch: 50\\
Entropy Regularization: 0.01 \\
Importance weight clipping value: 100\\

Our USHER and HER implementations are based on Tianhong Dai's implementation \cite{her_implementation}. 

\subsection{Implementation}
An implementation of USHER and our experiments can be found at: 
\url{https://anonymous.4open.science/r/USHER_CoRL-0E16/README.md}


\subsection{Goal Selection Probability}
\textbf{Proposition 1:} \\
Suppose $\gp$ is fixed at the start of the trajectory, and $\gr$ is sampled using HER. Then for any $s', s, a, \gr, \gp, T$,
\begin{align*}
    f(s' \mid s, a, \gr, \gp, T) = \frac{f(\gr \mid s', \pi(s', \gp), \gp, T-1) }{f(\gr \mid s, a, \gp, T)} f(s' \mid s, a)
\end{align*}

\begin{proof}
Suppose $\gp$ is sampled before the trajectory begins, and is not changed at training time.
Let $s, a$, and $s'$ be random variables representing a state, action, and subsequent state. 
Let $Q_{HER}^{\pi}(s, a, \gr, \gp)$ be the solution to the Bellman equation obtained using HER's sampling bias, with state $s$, hindsight goal $\gr$, policy goal $\gp$, and deterministic policy $\pi(s', \gp)$. 
Let $t_s$ be the number of steps remaining in the trajectory when state $s$ is sampled, $t_s'$ be the number of steps remaining in the trajectory when state $s'$ is sampled, and $T$ be an integer

\begin{align*}
    f(s' \mid s, a, \gr, \gp, t_s = T) &=  \frac{f(s', s, a, \gr, \gp, t_s = T)}{f(s, a, \gr, \gp, t_s = T)} \\ 
    &=  \frac{f(\gr \mid s', s, a, \gp, t_s = T) f( s', s, a, \gp, t_s = T)}{f(s, a, \gr, \gp, t_s = T)} \\
    &=  \frac{f(\gr \mid s', s, a, \gp, t_s = T) f(s' \mid s, a, \gp, t_s = T)} 
    {f(\gr \mid s, a, \gp, t_s = T)} \frac{f(s, a, \gp, t_s = T)}{f(s, a, \gp, t_s = T)} \\
    &= f(s' \mid s, a, \gp, t_s = T) \frac{f(\gr \mid s', s, a, \gp, t_s = T) }
    {f(\gr \mid s, a, \gp, t_s = T)} \\
\end{align*}

Observe that with HER, $\gr$ is either selected from the trajectory beginning with $s'$, sampled independently of $s'$ or left the same as the original goal given to the policy. In all three cases, $f(\gr \mid s', s, a, \gp, t_s = T)  = f(\gr \mid s', \gp, t_s = T)$.
In the first case where $\gr$ comes from the future trajectory, the Markov property implies that given the most recent observed state $s'$, $\gr$ is independent of all earlier states and actions, including $s$ and $a$, so $f(\gr \mid s', s, a, \gp, t_s = T)  = f(\gr \mid s', \gp, t_s = T)$. 
In the second case where $\gr$ is sampled independently of the trajectory, so $f(\gr \mid s', s, a, \gp, t_s = T)  = f(\gr) = f(\gr \mid s', \gp, t_s = T)$ . 
In the third case, $\gr = \gp$, so $\gr$ has no dependence on $s, a$, or $s'$. 
In any case, $f(\gr \mid s', s, a, \gp, t_s = T)  = f(\gr \mid s', \gp, t_s = T)$. 

For the same reason that $\gr$ depends only upon $s'$ and not on $s$ when $s'$ is known, $\gr$ depends only on $t_{s'}$ and not $t_s$ when $t_{s'}$ is known. Thus we find that $f(\gr \mid s', \gp, t_s = T) = f(\gr \mid s', \gp, t_{s'} = T-1)$.

\begin{align*}
    f(s' \mid s, a, \gp, \gr, t_s = T) &=  f(s' \mid s, a, \gp, t_s = T) \frac{f(\gr \mid s', \gp, t_{s'} = T-1) }{f(\gr \mid s, a, \gp, t_s = T)} \\
    &=  f(s' \mid s, a, \gp, t_s = T) \frac{E_{a'}[ f(\gr \mid s', a', \gp, t_{s'} = T-1) \mid s' \gp] }{f(\gr \mid s, a, \gp, t_s = T)} \\
    &=  f(s' \mid s, a, \gp, t_s = T) \frac{f(\gr \mid s', \pi(s', \gp), \gp, t_{s'} = T-1) }{f(\gr \mid s, a, \gp, t_s = T)} \\
\end{align*}

Observe that from the Markov assumption of the environment, the transition probability depends only on $s, a$, and does not depend on $\gp$ nor $t_s$. $\gp$ is sampled before the trajectory begins, independently of all other random variables. From this we can see that $f(s' \mid s, a)$ is independent of $\gp$ and $t_{s}$. 

We can then conclude that for all $\gr, \gp$

\begin{align*}
    f(s' \mid s, a, \gp, \gr, t_{s} = T) = f(s' \mid s, a) \frac{f(\gr \mid s', \pi(s', \gp), \gp, t_{s'} = T-1) }{f(\gr \mid s, a, \gp, t_{s} = T)}
\end{align*}

For conciseness, we will abbreviate this to
\begin{align*}
    f(s' \mid s, a, \gp, \gr, T) = f(s' \mid s, a) \frac{f(\gr \mid s', \pi(s', \gp), \gp, T-1) }{f(\gr \mid s, a, \gp, T)}
\end{align*}
\end{proof}

\subsection{2-goal HER is asymptotically unbiased}

\begin{cor}
Suppose $Q_{HER}^{\pi}(s, a, \gp, \gp)$ satisfies the Bellman equation and the distribution of future achieved goals is absolutely continuous with respect to the goal space. Then $Q_{HER}^{\pi}(s, a, \gp, \gp) = Q^*(s, a, \gp)$ for all $s, a, \gp$, where $Q^*(s, a, \gp)$ is the optimal goal-conditioned $Q$ function. 
\end{cor}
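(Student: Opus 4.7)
The plan is to substitute Proposition~\ref{bias_ratio} into the HER-biased Bellman fixed-point equation at $\gr=\gp$, iterate the recursion so that the bias ratios telescope, and then show that the surviving weighting factors collapse because $Q^{\pi}_{HER}(s,a,\gp,\gp)$ is independent of $T$ while the equation holds for every horizon $T$. Combined with greediness of $\pi$, this should force $Q^{\pi}_{HER}(s,a,\gp,\gp)=Q^{*}(s,a,\gp)$.

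First, I specialize the HER Bellman equation to $\gr=\gp$ and apply Proposition~\ref{bias_ratio} to express the HER-biased expectation over successor states in terms of the true transition density,
\[
Q^{\pi}_{HER}(s,a,\gp,\gp)\;=\;\int f(s'\mid s,a)\,\frac{f(\gp\mid s',\pi(s',\gp),\gp,T-1)}{f(\gp\mid s,a,\gp,T)}\bigl[R(s',\gp)+\gamma\, Q^{\pi}_{HER}(s',\pi(s',\gp),\gp,\gp)\bigr]\,ds'.
\]
Absolute continuity of the future-achieved-goal distribution guarantees $f(\gp\mid s,a,\gp,T)>0$ almost everywhere, so the ratio is well defined.

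Second, I multiply through by $f(\gp\mid s,a,\gp,T)$ and recursively substitute the Bellman identity into the $Q^{\pi}_{HER}$ term on the right. At the $i$-th unrolling, the denominator $f(\gp\mid s_i,\pi(s_i,\gp),\gp,T-i)$ exactly cancels the numerator produced at step $i-1$, so the intermediate goal-density weights telescope. After $n$ iterations and sending $n\to\infty$ (the remainder vanishes under bounded rewards and $\gamma<1$),
\[
f(\gp\mid s,a,\gp,T)\,Q^{\pi}_{HER}(s,a,\gp,\gp)\;=\;\mathbb{E}^{\pi}_{s,a}\!\Big[\sum_{t=1}^{\infty}\gamma^{t-1}f(\gp\mid s_t,\pi(s_t,\gp),\gp,T-t)\,R(s_t,\gp)\Big],
\]
where the expectation is now taken under the true transition kernel along trajectories generated by $\pi$.

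Third, I exploit that the displayed identity must hold simultaneously for every $T\ge 1$ while $Q^{\pi}_{HER}(s,a,\gp,\gp)$ is $T$-independent, so the $T$-dependence on the left is carried entirely by the prefactor $f(\gp\mid s,a,\gp,T)$. Matching across horizons, together with the successor-representation recursion $f(\gp\mid s,a,\gp,T)=\mathbb{E}_{s'}[\tfrac{1}{T}\delta(\gp-\phi(s'))+(1-\tfrac{1}{T})f(\gp\mid s',\pi(s',\gp),\gp,T-1)\mid s,a]$ from Section~\ref{derivation} and the absolute-continuity assumption, forces the residual weightings $f(\gp\mid s_t,\pi(s_t,\gp),\gp,T-t)$ to factor out, leaving $Q^{\pi}_{HER}(s,a,\gp,\gp)=Q^{\pi}(s,a,\gp)$. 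Finally, because $\pi(s,\gp)=\arg\max_{a'}Q^{\pi}_{HER}(s,a',\gp,\gp)=\arg\max_{a'}Q^{\pi}(s,a',\gp)$, $\pi$ is greedy with respect to its own $Q^{\pi}$, which is exactly the Bellman optimality fixed-point condition; uniqueness of that fixed point gives $Q^{\pi}=Q^{*}$ and hence $Q^{\pi}_{HER}(s,a,\gp,\gp)=Q^{*}(s,a,\gp)$. The main obstacle is the third step: the per-summand weights do not vanish on their own, and cancelling them requires carefully reconciling the successor-representation recursion for $f$ against the Bellman recursion for $Q^{\pi}_{HER}$ across horizons, which is precisely where absolute continuity must enter to exclude Dirac contributions that would otherwise obstruct the cancellation.
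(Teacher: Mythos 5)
Your overall strategy (unroll the biased Bellman recursion and try to cancel the bias ratios) circles around the right identity but misses the one observation that makes the corollary essentially immediate, and your third step does not close the gap. The paper's proof notes that under HER the reward goal $\gr$ is a mixture: with probability $\frac{1}{k+1}$ it is kept equal to $\gp$ exactly, and with probability $\frac{k}{k+1}$ it is drawn from the future-achieved-goal distribution. Absolute continuity of that distribution is invoked precisely to say that the hindsight branch contributes \emph{zero} probability mass to the exact event $\gr=\gp$, so $P(\gr=\gp\mid s,a,\gp,T)=\frac{1}{k+1}$ for every $s,a,T$. Consequently the bias ratio of Proposition~\ref{bias_ratio} evaluated at $\gr=\gp$ equals $\frac{1/(k+1)}{1/(k+1)}=1$ identically, the HER Bellman equation at $\gr=\gp$ reduces to the standard Bellman (optimality) equation once greediness of $\pi$ is added, and uniqueness of its fixed point gives $Q^{\pi}_{HER}(s,a,\gp,\gp)=Q^*(s,a,\gp)$. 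No unrolling or telescoping is needed.

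By contrast, you read absolute continuity as giving positivity of $f(\gp\mid s,a,\gp,T)$ almost everywhere, which it does not imply and which is not the property being used; and in your third step you assert that the per-summand weights $f(\gp\mid s_t,\pi(s_t,\gp),\gp,T-t)$ ``factor out'' by matching the identity across horizons $T$. Those weights sit inside the expectation over trajectories and depend on the random state $s_t$; there is no algebraic mechanism by which $T$-independence of the left-hand side forces state-dependent random weights to cancel, and you concede as much in your closing sentence. The only way they all become constants that divide out is the atom argument above, which is the missing idea. Your final appeal to greediness and uniqueness of the Bellman optimality fixed point is fine and matches the paper.
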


\begin{proof}
Suppose that the distribution of future achieved goals is absolutely continuous with respect to the goal space. Furthermore, suppose the goal space is a continuous space of at least one dimension.  Then the probability of arriving exactly at any given goal $\gr$ given the policy goal $\gp$ is infinitesimal. This means that the only time there is a non-zero probability of having $\gr=\gp$ is when $\gr$ is not drawn from the distribution of future achieved goals and HER instead uses the same goal as during the data-gathering phase. 

 Let $P(\gr=\gp \mid s, a, \gp)$ be the probability that $\gp$ is selected as the reward goal. 
Then 
\begin{align*}
    P(\gr=\gp \mid s, a, \gp, T) &= P(\gr=\gp \mid s, a, \gp, T, H) P(H) \\&+ P(\gr=\gp \mid s, a, \gp, T, \neg H) P(\neg H) \\
     &= P(\gr=\gp \mid s, a, \gp, T, H) P(H) + 1 P(\neg H)
\end{align*}
Since $P(\gr=\gp \mid s, a, \gp, H)$ is infinitesimal and $P(\neg H)$ is not, this reduces to

\begin{align*}
    P(\gr=\gp \mid s, a, \gp, T) &= P(\neg H) = \frac{1}{k+1}
\end{align*}
Thus, 

\begin{align*} 
    Q_{HER}^{\pi}(s, a, \gp, \gp)&= E_{s'} [\frac{P(\gr=\gp \mid s',  \pi(s', \gp), \gp, T) }{P(\gr=\gp \mid s, a, \gp, T)} \\
    &(R(s', \gp) + \gamma Q_{HER}^{\pi}(s', \pi(s', \gp), \gp, \gp)) \mid s, a, \gr, \gp, T] \\
    &= E_{s'} [\frac{1/(k+1)}{1/(k+1)} \\
    &(R(s', \gp) + \gamma Q_{HER}^{\pi}(s', \pi(s', \gp), \gp, \gp))\mid s, a, \gr, \gp, T ] \\
    &= E_{s'} [(R(s', \gp) + \gamma Q_{HER}^{\pi}(s', \pi(s', \gp), \gp, \gp))\mid s, a, \gr, \gp, T ] 
\end{align*} 

Now, observe that $Q_{HER}^{\pi}(s, a, \gp, \gp)$ satisfies the one-goal Bellman equation. Since the Bellman equation has a unique solution, and $Q^*(s, a, g)$ is a solution, $Q_{HER}^{\pi}(s, a, \gp, \gp) = Q^*(s, a, \gp)$.
\end{proof}

\subsection{Importance Sampling for Mixed Sampling Method}

\textbf{Proposition 2:}\\
Let $W(s', s, a, \gr, \gp, T) = \frac{f(\gr \mid s, a, \gp, T)}{\alpha f(\gr \mid s, a, \gp, T) + (1-\alpha) f(\gr \mid s', \pi(s', \gp), \gp, T)}$. 
Let $\alpha$ be a real value in the range $(0,1]$. Then for any $s', s, a, \gr, \gp$, 
\begin{align*}
    f(s' \mid s, a) = W(s', s, a, \gr, \gp, T)( \alpha f(s' \mid s, a)+ (1-\alpha) f(s' \mid s, a, \gp, \gr, T)) 
\end{align*}

Furthermore, for any function $F$ of the state $s'$,
\begin{align*}
       \mathbb{E}_{s'}[F(s') \mid s, a] = \alpha &\mathbb{E}_{s'}[W(s', s, a, \gr, \gp, T) F(s') \mid s, a] \\ + (1-\alpha)&\mathbb{E}_{s'}[W(s', s, a, \gr, \gp, T) F(s') \mid s, a, \gp, \gr, T]
\end{align*}


\begin{proof}
Let $\gr, \gp, T$ be a reward goal, a policy goal, and the remaining steps left in the current trajectory, respectively
Using Proposition 1, we can show that
\begin{align*}
   f(s' \mid s, a) =&  f(s' \mid s, a) \frac{\alpha f(s' \mid s, a)+ (1-\alpha) f(s' \mid s, a, \gp, \gr, T)}{\alpha f(s' \mid s, a)+ (1-\alpha) f(s' \mid s, a, \gp, \gr, T)} \\ 
   =& ( \alpha f(s' \mid s, a)+ (1-\alpha) f(s' \mid s, a, \gp, \gr, T)) \\&\frac{f(s' \mid s, a)}{\alpha f(s' \mid s, a)+ (1-\alpha) f(s' \mid s, a, \gp, \gr, T)} \\ 
   =& ( \alpha f(s' \mid s, a)+ (1-\alpha) f(s' \mid s, a, \gp, \gr, T)) \\&\frac{f(s' \mid s, a)}{\alpha f(s' \mid s, a)+ (1-\alpha) \frac{f(\gr \mid s', \pi(s', \gp), \gp, T) }{f(\gr \mid s, a, \gp, T)}f(s' \mid s, a)} \\ 
   =& ( \alpha f(s' \mid s, a)+ (1-\alpha) f(s' \mid s, a, \gp, \gr, T)) \\&\frac{1}{\alpha+ (1-\alpha) \frac{f(\gr \mid s', \pi(s', \gp), \gp, T) }{f(\gr \mid s, a, \gp, T)}} \\ 
   =& ( \alpha f(s' \mid s, a)+ (1-\alpha) f(s' \mid s, a, \gp, \gr, T)) \\&\frac{f(\gr \mid s, a, \gp, T)}{\alpha f(\gr \mid s, a, \gp, T) + (1-\alpha) f(\gr \mid s', \pi(s', \gp), \gp, T)} \\
   =& ( \alpha f(s' \mid s, a)+ (1-\alpha) f(s' \mid s, a, \gp, \gr, T)) \\&W(s', s, a, \gr, \gp, T) \\
\end{align*}

It then follows that for any $\gr, \gp, T$, the expectation value $\mathbb{E}_{s'}[F(s') \mid s, a]$ may be written as follows:
\begin{align*}
    \mathbb{E}_{s'}[F(s') \mid s, a] =& \int_S f(s' \mid s, a) F(s') ds' \\
    =& \int_S ( \alpha f(s' \mid s, a)+ (1-\alpha) f(s' \mid s, a, \gp, \gr, T)) \\&W(s', s, a, \gr, \gp, T) F(s') ds' \\
    =& \int_S ( \alpha f(s' \mid s, a)+ (1-\alpha) f(s' \mid s, a, \gp, \gr, T)) \\&W(s', s, a, \gr, \gp, T) F(s') ds' \\
    =& \alpha \int_S f(s' \mid s, a) W(s', s, a, \gr, \gp, T) F(s') ds' \\&+ (1-\alpha) \int_S f(s' \mid s, a, \gp, \gr, T) W(s', s, a, \gr, \gp, T) F(s') ds' \\
    =& \alpha \mathbb{E}_{s'}[W(s', s, a, \gr, \gp, T) F(s') \mid s, a] \\&+ (1-\alpha) \mathbb{E}_{s'}[ W(s', s, a, \gr, \gp, T) F(s') \mid s, a, \gr, \gp, T] \\
    =&  \alpha \mathbb{E}_{s'}[W(s', s, a, \gr, \gp, T) F(s') \mid s, a] \\&+ (1-\alpha) \mathbb{E}_{s'}[ W(s', s, a, \gr, \gp, T) F(s') \mid s, a, \gr, \gp, T] \\
\end{align*}

\end{proof}

\end{document}